\documentclass[letterpaper, 10 pt, conference]{ieeeconf}

\IEEEoverridecommandlockouts                              

\usepackage{cite}
\usepackage{amsmath,amssymb,amsfonts}
\usepackage{mathtools}
\usepackage{multirow}
\usepackage{booktabs}
\usepackage{graphicx}
\usepackage[table]{xcolor}
\usepackage{textcomp}
\usepackage{subcaption}
\usepackage{algorithm} 
\usepackage{algpseudocode}
\usepackage{graphicx}
\usepackage{xcolor}
\usepackage{amsmath}
\usepackage[colorlinks=true]{hyperref} 
\usepackage{tabularx} 
\newcolumntype{L}{>{\raggedright\arraybackslash}X}
\usepackage[export]{adjustbox} 
\usepackage{hyperref}

\usepackage[font=small,labelfont=bf]{caption}

\newif\ifhideauthors
\hideauthorsfalse

\newif\ifhideappendix
\hideappendixfalse

\newcommand{\authorswitch}[2]{%
  \ifhideauthors #2\else #1\fi
}

\newcommand{\appendixswitch}[2]{%
  \ifhideappendix #2\else #1\fi
}

\DeclareMathOperator*{\argmin}{arg\,min}
\long\def\anonymize#1{}

\title{\LARGE \bf
Safety-aware Model Predictive Path Integral Control with Signal Temporal Logic}

\authorswitch{\author{Yiqi Zhao$^{1}$, Taekyung Kim$^{2}$, Hideki Okamoto$^{3}$, Bardh Hoxha$^{3}$, Jyotirmoy V. Deshmukh$^{1}$, Lars Lindemann$^{4}$, \\Georgios Fainekos$^{3}$
\thanks{$^{1}$Yiqi Zhao and Jyotirmoy V. Deshmukh are with the Thomas Lord Department of Computer Science, University of Southern California, Los Angeles, CA 90089, USA.
        {\tt\small \{yiqizhao, jdeshmuk\}@usc.edu}}%
\thanks{$^{2}$Taekyung Kim is with the Department of Robotics, University of Michigan, Ann Arbor, MI 48109, USA.
        {\tt\small taekyung@umich.edu}}%
\thanks{$^{3}$ Hideki Okamoto, Bardh Hoxha, and Georgios Fainekos are with the Toyota Motor North America R\&D, Ann Arbor, MI 48105, USA. {\tt\small \{hideki.okamoto, bardh.hoxha, georgios.fainekos\}@toyota.com}}
\thanks{$^{4}$Lars Lindemann is with the Automatic Control Laboratory, ETH Z\"urich, Z\"urich, Switzerland. {\tt\small llindemann@ethz.ch}}
\thanks{This work was primarily carried out during the summer internships of Yiqi Zhao and Taekyung Kim at Toyota Motor North America R\&D.}
}}{}

\begin{document}

\maketitle
\begin{figure*}
    \centering    
    \includegraphics[width=0.63\textwidth]{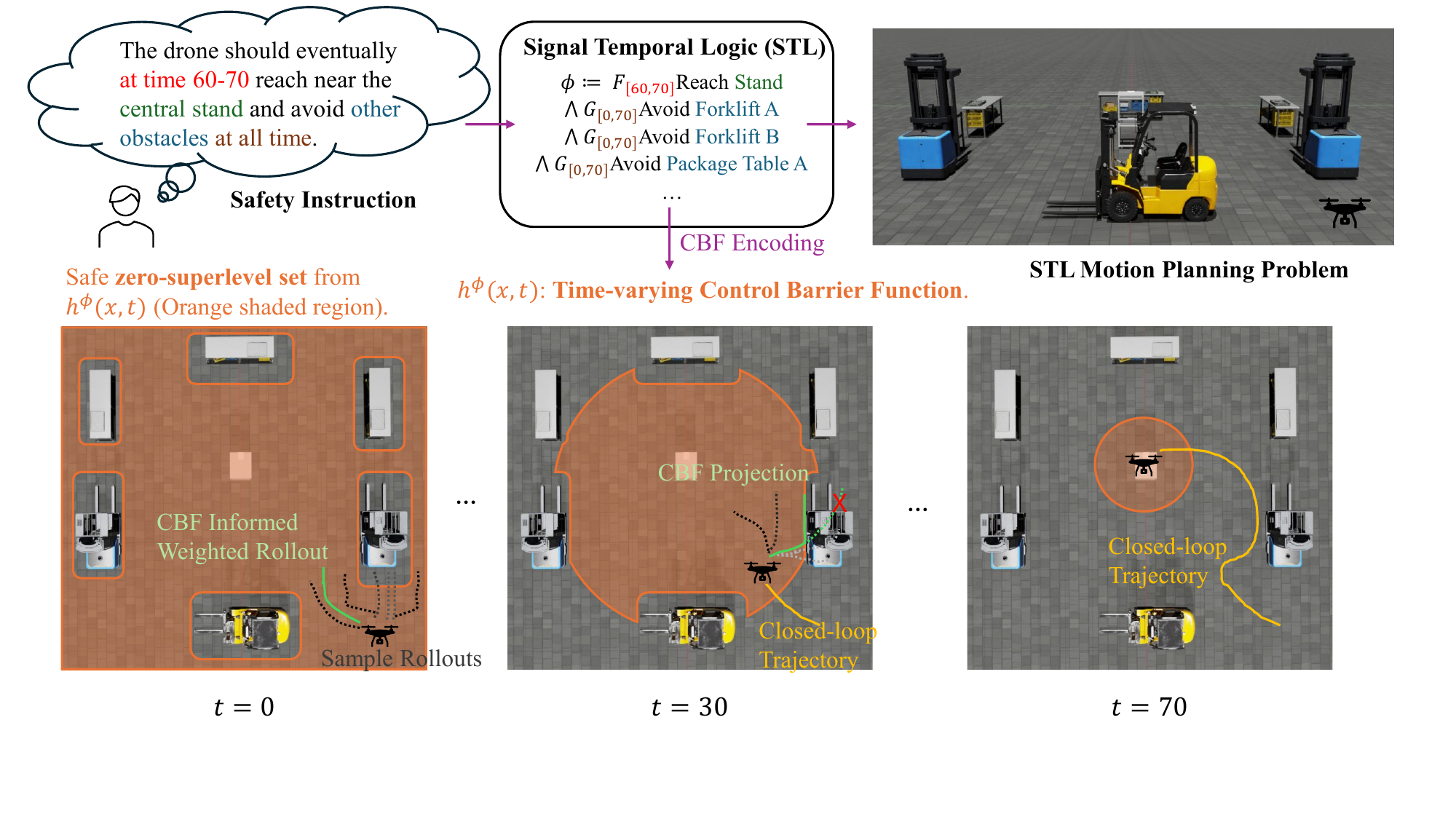}
    \caption{Safety-aware-stl-mppi: The top row highlights the STL motion planning problem with an STL formula $\phi$. Our solution is on the second row, where we rely on a time-varying CBF $h^\phi$, which encodes the region consistent with the STL constraint with its time-varying zero-super level set (shown in orange). We plan with MPPI using CBF informed weighted rollout, and project any weighted rollout (dashed green) violating the CBF into the safety region (solid green). The planning works in receding-horizon with a closed-loop trajectory (in yellow) satisfying $\phi$. The drawings in the figure are hypothetical. We show drawings in a case study in Section \ref{sec:case_study}.}
    \label{fig:overview}
    \vspace{-15pt}
\end{figure*}
\thispagestyle{empty}
\pagestyle{empty}

\newtheorem{definition}{Definition} 
\newtheorem{theorem}{Theorem} 
\newtheorem{assumption}{Assumption} 
\newtheorem{problem}{Problem} 
\newtheorem{remark}{Remark}
\newtheorem{lemma}{Lemma}
\newtheorem{corollary}{Corollary}
\newtheorem{proposition}{Proposition}
\newtheorem{example}{Example}

\begin{abstract}
Safety-aware motion planning remains a challenge in robotics, especially when missions are time-critical and are under complex specifications. In this paper, we propose safety-aware-stl-mppi, a computationally efficient sampling-based receding-horizon planning framework designed to promote satisfaction of constraints expressed in Signal Temporal Logic (STL). Our approach encodes discrete-time STL formulas into candidate time-varying control barrier functions (CBF), which are integrated into a model predictive path integral (MPPI) controller. Our method inherits the benefits of low computational cost from an efficiently parallelizable sampling based planner and utilizes CBF for constraints expressed in STL. We compare against several MPPI baselines using four artificial Mars Rover planning case studies with a diverse environment and cost setups, where we show our method consistently achieving high safety and efficiency. We show a quadcopter planning experiment with NVIDIA Isaac Lab.
\end{abstract}

\section{Introduction}
Signal temporal logic (STL) \cite{maler2004monitoring, fainekos2009robustness} arises as a commonly-used formal language that can express a rich set of real-time spatio-temporal safety and mission level requirements. For instance, STL has been used as a specification language for the responsibility sensitive safety model \cite{hekmatnejad2019encoding} for autonomous vehicles. Planning in STL with rigorous safety guarantees has been studied in \cite{raman2014model}, however with a high computational cost from solving mixed integer programs. Realizing this, \cite{lindemann2018control} proposes control barrier functions for STL specifications over continuous-time systems, which lead to convex programs given quadratic cost functions. Encodings for discrete-time stochastic systems have been studied in \cite{kordabad2024control}. In real-life applications, however, cost functions are generally nonconvex. Realizing the computational difficulty in such planning problems, sampling-based planners such as Model Predictive Path Integral are studied for general motion planning.

Model predictive path integral (MPPI) has been proposed \cite{williams2016aggressive, williams2018information} to solve optimal control problems by sampling the control inputs from a normal distribution and executing a weighted average over the rollouts based on performances. MPPI is easily parallelizable and has optimality guarantees \cite{homburger2025optimality}. The works in \cite{baldini2024don, halder2025trajectory, parwana2024model} took the first step in using STL specifications as the cost function in MPPI design. Ensuring satisfaction of general STL formulas remains unresolved.

\emph{In this paper, we view safety as satisfaction of a user-defined STL formula.} We propose \textbf{safety-aware-stl-mppi} (Figure \ref{fig:overview}), a safety-aware sampling-based receding-horizon control strategy under STL specification with general cost. Particularly, given a specification in STL, we first construct a time-varying control barrier function adapting the approach of \cite{lindemann2018control}. We incorporate the CBF in weighting the samples from our planner and design an efficient projection of the MPPI weighted rollout onto the CBF zero-superlevel set to promote STL satisfaction. We summarize our contributions:
\begin{itemize}
    \item We propose safety-aware-stl-mppi, a sampling-based receding-horizon planner that incorporates STL specifications as CBF-informed constraints while minimizing a user-defined cost function.
    \item We adapt the time-varying control barrier function encoding for STL formulas from \cite{lindemann2018control} to discrete-time and design a lightweight projection toward satisfaction of the associated CBF constraints.
    \item We present numeric case studies and a quadcopter simulation in Section \ref{sec:case_study}. The empirical results show that our algorithm achieves high safety, not attained with other MPPI baselines. We additionally demonstrate the computational efficiency of our approach. \authorswitch{We release codes associated with the algorithm at \href{https://zhaoy37.github.io/safety-aware-stl-mppi/}{https://zhaoy37.github.io/safety-aware-stl-mppi/}.}{}
\end{itemize}

\subsection{Related Literature}

\textbf{Planning with Temporal Logic.} Temporal logics are extensions of propositional and predicate logics to describe timed system behaviors. Linear temporal logic (LTL) \cite{pnueli1977temporal} and STL \cite{maler2004monitoring, fainekos2009robustness} are rich specification languages widely used for planning. Abstraction-based temporal logic planning \cite{fainekos2009temporal, ding2014optimal} constructs discrete abstraction over systems and uses graph-based planning. Optimization-based approaches use mixed integer programs \cite{raman2014model}, which are expensive to run online. Control barrier functions are used to encode STL specifications \cite{lindemann2018control}, leading to convex programs when the cost functions are restricted to be convex. Computationally efficient sampling-based methods are proposed in \cite{vasile2017sampling, kantaros2020stylus}. Recently, diffusion models \cite{zhong2022guided}, vision-language-action model based approaches \cite{zhao2026logic}, and neural predictive models are used in \cite{meng2023signal} for STL-based trajectory generation and planning but without safety guarantee. Different from these, we focus on addressing satisfaction in STL on an MPPI planner with customized cost functions that can differ from the STL constraints.

\textbf{Model Predictive Path Integral.} Model predictive path integral (MPPI) was proposed in \cite{williams2016aggressive, williams2018information} as a sampling-based receding-horizon controller. Reach-avoid specifications are considered in \cite{parwana2024model, parwana2025br} where time-invariant barrier functions are considered to ensure safety. Path-integral based offline policy learning methods in continuous-time are proposed in \cite{dimos2019learning, varnai2019prescribed, varnai2020guided} for STL tasks but are not designed to be optimized in closed-loop and guarantee satisfaction. Differentiable STL robustness is used as the objective of a Stein variational path-integral optimizer for long-horizon open-loop trajectory synthesis \cite{zheng2026stl}. The work closest to ours are \cite{baldini2024don, halder2025trajectory, halder2026lexicographic, han2026signal, bouzid2026autonomous}, which augment STL satisfaction as part of costs in MPPI rollouts. Existing cost-only MPPI methods do not provide a direct connection to a formal sufficient condition for STL satisfiaction. Our method is grounded in an exact STL-CBF condition, while its nominal real-time implementation approximates this condition. We introduce the baselines in detail in Section \ref{sec:case_study}. 

\section{Preliminaries}
Consider a control system of the form $x_{t + 1} = f(x_t, u_t) \coloneq f_0(x_t)+ g(x_t)u_t$ with $t \in \mathbb{Z}_{\ge 0}$, where $x_t \in \mathbb{R}^n$ and $u_t \in \mathcal{U} \subseteq \mathbb{R}^m$ are the state and input with actuation limit $\mathcal{U}$, assumed to be box constraints (i.e., the constraints are in the form $e_i^Tu_t \le u_{\max, i}$ or $-e_i^Tu_t \le -u_{\min, i}, \text{ where } i \in \{1, \hdots, m\}$, $e_i$ is the $i$-th Euclidean basis vector, and $u_{\min, i}, u_{\max, i} \in \mathbb{R}$). Assume $f_0:\mathbb{R}^n \rightarrow \mathbb{R}^n$ and $g:\mathbb{R}^n \rightarrow \mathbb{R}^{n \times m}$ are Lipschitz continuous and differentiable.

\begin{table*}[ht]
  \centering
  \renewcommand{\arraystretch}{1.2}
  \setlength{\tabcolsep}{6pt}
  \footnotesize
  \begin{tabularx}{\textwidth}{l L}
\hline
\textbf{Formula} & \textbf{Encoding Rule (which we follow when constructing the CBF)} \\
\hline
$\pi^\mu$ & $\forall x \in D, h^{\pi^\mu}(x, \tau_0) \ge 0$ implies $(x, \tau_0) \models \pi^\mu$.\\
\hline
$G_{[a,b]}\pi^\mu$ 
& $\forall t' \in \sigma(a, b),\forall x \in D, h^{G_{[a, b]}\pi^\mu}(x, t')\ge 0 \text{ implies } \mu(x) \ge 0$.\\ 
\hline
$F_{[a, b]}\pi^\mu$ 
& $\exists t' \in \sigma(a, b),\forall x \in D, h^{F_{[a, b]}\pi^\mu}(x, t')\ge 0 \text{ implies } \mu(x) \ge 0$.\\ 
\hline
$\mu_1 U_{[a, b]} \mu_2$ 
&  $h^{\mu_1 U_{[a, b]} \mu_2}(x, t) \coloneq \widetilde{\text{min}}(h_1(x, t), h_2(x, t))$ where $ \exists t' \in \sigma(a, b), \forall x \in D, h_2(x, t') \ge 0 \text{ implies } \mu_2(x) \ge 0$ \\& and $\forall t'' \in \sigma(0, t'), \forall x \in D, h_1(x, t'') \ge 0 \text{ implies }\mu_1(x) \ge 0$.\\ 
\hline
$G_{[a,b]} \mu_1 \wedge\mu_2$ 
& $h^{G_{[a, b]}\mu_1 \wedge\mu_2}(x, t) \coloneq\widetilde{\text{min}}(h_1(x, t), h_2(x, t))$ where $\forall t' \in \sigma(a, b), \forall x \in D, h_1(x, t') \ge 0 \text{ implies } \mu_1(x) \ge 0$ \\& and $h_2(x, t') \ge 0 \text{ implies } \mu_2(x) \ge 0$. \\ 
\hline
$F_{[a,b]} \mu_1 \wedge\mu_2$ 
& $h^{F_{[a, b]}\mu_1 \wedge\mu_2}(x, t) \coloneq \widetilde{\text{min}}(h_1(x, t), h_2(x, t))$ where $\exists t' \in \sigma(a, b), \forall x \in D, h_1(x,t')  \ge 0 \text{ implies } \mu_1(x) \ge 0$ \\&and $h_2(x, t') \ge 0 \text{ implies } \mu_2(x) \ge 0$.\\ 
\hline
$\mu_1 \wedge \mu_2 U_{[a, b]}(x, t) \mu_3 \wedge \mu_4$ 
& $h^{\mu_1 \wedge \mu_2 U_{[a, b]}(x, t) \mu_3 \wedge \mu_4}(x, t) \coloneq\widetilde{\text{min}}(h_1(x, t), \hdots, h_4(x, t))$ where \\&$\exists t' \in \sigma(a, b), \forall x \in D, h_3(x, t') \ge 0 \text{ implies } \mu_3(x) \ge 0$ and $h_4(x, t') \ge 0 \text{ implies } \mu_4(x) \ge 0$ and \\& $\forall t'' \in \sigma(0, t'), \forall x \in D, h_1(x, t'') \ge 0 \text{ implies } \mu_1(x) \ge 0$ and $h_2(x, t'') \ge 0 \text{ implies } \mu_2(x)$.\\ 
\hline
$\bigwedge_{i=1}^p \phi_i$
& $h^{\wedge_{i = 1}^p\phi_i}(x, t) \coloneq \widetilde{\text{min}}(h_1(x, t), \hdots, h_p(x, t))$ where $h_i(x, t)$ is the CBF for $\phi_i$ following the recursive definition.\\ 
\hline
\end{tabularx}
\caption{Candidate CBF functions for STL encoding}
  \label{tab:stl_encodings}
  \vspace{-15pt}
\end{table*}
We consider in this paper a fragment of signal temporal logic (STL) specifications following \cite{lindemann2018control}. We refer readers interested in the general syntax and semantics to \cite{maler2004monitoring, bartocci2018specification}. The STL syntax is defined recursively as follows
\begin{equation}
\label{eq:syntax}
    \begin{split}
        \varphi &\coloneq \top \mid \pi^\mu \mid \neg\pi^\mu \mid \varphi_1 \wedge \varphi_2,\\
        \phi &\coloneq G_{[a, b]}\varphi \mid F_{[a, b]} \varphi \mid \varphi_1 U_{[a, b]} \varphi_2 \mid \phi_1 \wedge \phi_2 \mid \varphi
    \end{split}
\end{equation}
where $\varphi_1$ and $\varphi_2$ are of type $\varphi$ and $\phi_1$ and $\phi_2$ are of type $\phi$. We denote the Boolean True and False with $\top$ and $\bot$ respectively. We emphasize that the atomic elements of an STL formula are the predicates $\pi^\mu:\mathbb{R}^n \rightarrow \{\top, \bot\}$, which are defined via a predicate function $\mu: \mathbb{R}^n \rightarrow \mathbb{R}$ such that $\pi(x_t) \coloneq \top$ if $\mu(x_t) \ge 0$ and $\pi(x_t) \coloneq \bot$ otherwise. Let us define $\sigma(\tau_1, \tau_2) \coloneq [\tau_1, \tau_2] \cap \{0, \hdots, H\}$, where we assume $H$ is sufficiently large to capture the finite-length specification (see \cite{sadraddini2015robust} for computing lower bounds of $H$). We say that $x$ satisfies an STL formula $\phi$ at time $\tau_0$ if $(x, \tau_0) \models \phi$ and
\begin{align*}
    (x, \tau_0) \models \pi^\mu &\Leftrightarrow \mu(x_{\tau_0}) \ge 0, \\
    (x, \tau_0) \models \neg \pi^\mu &\Leftrightarrow \neg((x, \tau_0) \models \pi^\mu),\\
    (x, \tau_0) \models \varphi_1 \wedge \varphi_2 &\Leftrightarrow (x, \tau_0) \models \varphi_1 \wedge (x, \tau_0) \models \varphi_2,\\
    (x, \tau_0) \models G_{[a, b]}\varphi &\Leftrightarrow \forall \tau \in \sigma(\tau_0 + a, \tau_0 + b),(x, \tau) \models \varphi,\\
    (x, \tau_0) \models F_{[a, b]}\varphi &\Leftrightarrow \exists \tau \in \sigma(\tau_0 + a, \tau_0 + b),(x, \tau) \models \varphi,\\
    (x, \tau_0) \models \varphi_1 U_{[a, b]} \varphi_2 &\Leftrightarrow \exists \tau_1 \in \sigma(\tau_0 + a, \tau_0 + b),(x, \tau_1) \models \varphi_2\\
    &\wedge \forall \tau_2 \in \sigma(\tau_0, \tau_1), (x, \tau_2) \models \varphi_1,\\
    (x, \tau_0) \models \phi_1 \wedge \phi_2 &\Leftrightarrow (x, \tau_0) \models \phi_1 \wedge (x, \tau_0) \models \phi_2.
\end{align*}
We assume in this paper that $a, b \ge 0$. STL has a robust semantics $\rho^\phi(x, \tau_0) \in \mathbb{R}$ which measures how robustly a formula $\phi$ is satisfied by $x$ at $\tau_0$. Importantly, $(x, \tau_0)\models \phi$ if $\rho^\phi(x, \tau_0) > 0$ \cite{fainekos2009robustness}. We refer the readers to the definition of robust semantics in \cite{donze2010robust, fainekos2009robustness}.

\section{Problem Formulation}
In receding-horizon control, we are given a user-defined planning horizon $\mathcal{H} \le H - t$. We are also given user-defined cost functions $q:\mathbb{R}^n \times \mathbb{R}^m\rightarrow \mathbb{R}$ denoting the stage cost and $E:\mathbb{R}^n \rightarrow \mathbb{R}$ denoting the terminal cost. At each current time $t \in \{0, \hdots, H - 1\}$ during planning, we are asked to find an optimal sequence of control inputs $v_t^{*, (t)}, \hdots, v_{t + \mathcal{H} - 1}^{*, (t)}$ via solving an open-loop optimal control problem (OCP)
\begin{align}
\label{eq:ocp}
    &v_t^{*, (t)}, \hdots, v_{t + \mathcal{H} - 1}^{*, (t)} \coloneq  \argmin_{u_t, \hdots, u_{t + \mathcal{H} - 1} \in \mathcal{U}}\{E(x^{(t)}_{t + \mathcal{H}}) \\&+ \sum_{\tau = t}^{t + \mathcal{H} - 1}q(x^{(t)}_\tau, u_\tau) \text{ s.t. } x^{(t)}_{\tau + 1} \coloneq f(x^{(t)}_\tau, u_\tau)\}.\nonumber
\end{align}
Note that we use the superscript $(t)$ here to denote the current time when we solve the OCP and the subscript to denote the predictive steps. After solving the OCP at time $t$, we plan in closed-loop by executing $v_t^{*, (t)}$ and increment the current time $t$ to solve the OCP again\footnote{As standard practice, $\mathcal{H}$ is shrunk when $t + \mathcal{H} > H$.}. Through this procedure, we attain $x^*$, which corresponds to the closed-loop trajectory from the receding-horizon procedure. Namely, $x^*_{t+1}\coloneq f(x^*_t, v^{*, (t)}_t)$ with the closed-loop solution $v^* \coloneq (v^{*, (0)}_0, \hdots, v^{*, (H - 1)}_{H - 1})$. Receding-horizon control allows trajectories robust against system disturbances.
\begin{problem}
\label{prob:formulation}
Given a system $f$, the initial state $x_0$, a planning horizon $\mathcal{H}$, and cost functions $q$ and $E$, design a feedback control s.t. $(x^*, 0) \models \phi$, where $\phi$ is a user-defined STL specification. At each time $t \in \{0, \hdots, H - 1\}$, we would like to optimize the OCP defined in \eqref{eq:ocp}.
\end{problem}

\section{Safety-Aware-STL-MPPI}
We propose safety-aware-stl-mppi to approach Problem \ref{prob:formulation}. The algorithm relies on MPPI \cite{williams2018information}, a sampling-based method proposed to solve OCPs without constraint satisfaction guarantee. In an attempt to enable hard constraint satisfaction, time-invariant CBFs were first considered in MPPI-CBF \cite{tao2022control}, Shield-MPPI \cite{yin2023shield}, and BR-MPPI \cite{parwana2025br} as a safety filter to enforce non-STL safety constraints. Inspired by these works, we propose to incorporate time-varying CBF in MPPI to satisfy STL constraints. We first introduce how time-varying CBF can be used to encode STL formulas, which we adapt to discrete-time from \cite{lindemann2018control, lindemann2025formal} where continuous-time CBFs are discussed. This design choice is naturally motivated by our formulation in Problem \ref{prob:formulation}.

\label{sec:alg}
\subsection{Time-varying Control Barrier Function for STL}
\label{subsec:stl_cbf}
Consider the system $f$ and a function $h:D \times \mathbb{Z}_{\ge0} \rightarrow \mathbb{R}$ where $D \subseteq \mathbb{R}^n$ denotes the domain in which the system operates. We assume $h$ to be differentiable in $x$ and consider a time-varying set $\mathcal{C}_t \coloneq \{x \in D \mid h(x, t) \ge 0\}$.
\begin{definition}
\label{def:cbf}
    \textbf{Time-varying Control Barrier Function.} $h$ is a candidate time-varying control barrier function (CBF) to $f$ if $C_t \neq \emptyset, \forall t \in \{0, \hdots, H\}$. A candidate CBF $h$ is valid if $\forall t \in \{0, \hdots, H - 1\}, \forall x \in C_t$, $\exists u \in \mathcal{U}$ such that
    \begin{align}
    \label{eq:cbf_cons}
        h(f(x, u), t + 1) - h(x, t) \ge -\alpha h(x, t),
    \end{align}
    where $\alpha \in (0, 1]$ is a positive constant.
\end{definition}

Consider $S(x, t) \coloneq \{u \in \mathcal{U} \mid \eqref{eq:cbf_cons}\}$. We present Theorem \ref{thm:forward}\appendixswitch{, whose proof can be found in Appendix \ref{proof:forward}}{.}
\begin{theorem}
\label{thm:forward}
    \textbf{Forward Invariance.} Let $x_0 \in \mathcal{C}_0$ and consider a sequence of inputs $(u_0, \hdots, u_{H - 1})$ where $u_t \in S(x_t, t)$. Then, $x_t \in C_t, \forall t \in \{0, \hdots, H\}$.
\end{theorem}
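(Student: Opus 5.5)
The plan is induction on $t$, using only the rearranged form of the CBF inequality \eqref{eq:cbf_cons} and the range $\alpha \in (0,1]$. Throughout, $x_{t+1} = f(x_t, u_t)$ is the trajectory generated by the given input sequence. For $t \in \{0,\hdots,H\}$ let $P(t)$ be the claim $h(x_t, t) \ge 0$, i.e.\ $x_t \in \mathcal{C}_t$. The base case $P(0)$ is exactly the hypothesis $x_0 \in \mathcal{C}_0$.

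For the inductive step, fix $t \in \{0,\hdots,H-1\}$ and assume $P(t)$. Since $u_t \in S(x_t, t)$, the input $u_t$ lies in $\mathcal{U}$ and satisfies \eqref{eq:cbf_cons} at $(x_t, t)$. Rearranging gives
\begin{align*}
h(x_{t+1}, t+1) = h(f(x_t,u_t), t+1) \ge (1-\alpha)\, h(x_t, t).
\end{align*}
Because $\alpha \in (0,1]$, we have $1-\alpha \ge 0$, and by $P(t)$ we have $h(x_t,t)\ge 0$. The right-hand side is therefore nonnegative, so $h(x_{t+1},t+1) \ge 0$.

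It remains to check that $x_{t+1}$ lies in $D$, since $\mathcal{C}_{t+1}$ is defined as a subset of $D$. I would handle this by noting that $h$ is only defined on $D \times \mathbb{Z}_{\ge 0}$. So the statement $u_t \in S(x_t,t)$ already presupposes that $h(f(x_t,u_t),t+1)$ is defined, which forces $f(x_t,u_t) \in D$. Alternatively, I would state explicitly that trajectories remain in the operating domain $D$, in keeping with the paper's description of $D$ as the domain in which the system operates. With this, $x_{t+1} \in \mathcal{C}_{t+1}$, i.e.\ $P(t+1)$ holds, and induction over $t = 0,\hdots,H-1$ yields the claim for all $t \in \{0,\hdots,H\}$.

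The argument has no real obstacle. The only delicate points are the sign condition $1-\alpha \ge 0$, which is why $\alpha \le 1$ is imposed, and the domain membership just discussed. Validity of $h$ is not needed here, because the inputs are assumed to lie in $S(x_t,t)$; validity only guarantees that $S(x_t,t)$ is nonempty.
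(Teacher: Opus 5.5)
Your proof is correct and matches the paper's argument: induction on $t$ with base case $x_0 \in \mathcal{C}_0$, and an inductive step $h(f(x_t,u_t),t+1) \ge (1-\alpha)h(x_t,t) \ge 0$ that uses $\alpha \in (0,1]$. The paper leaves tacit that $f(x_t,u_t) \in D$ is needed for $h(f(x_t,u_t),t+1)$ to be defined, and making this explicit as you do is a harmless and reasonable addition.
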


Given an STL formula $\phi$ from fragment \eqref{eq:syntax}, the goal now is to design a CBF $h^\phi(x, t)$ such that $h^\phi(x, t) \ge 0, \forall t \in \{0, \hdots, H\}$ ensures $(x, 0) \models \phi$. We follow the recursive semantics and start by considering the predicate. Given a predicate $\pi^\mu$, we consider any function $h^{\pi^\mu}$ such that $h^{\pi^\mu}(x, \tau_0) \ge 0 \implies (x, \tau_0) \models \pi^\mu$. A natural choice for $h^{\pi^\mu}$ is $\mu$ if $\mu$ is differentiable. Otherwise, we consider a smooth construction (e.g. $h^{\pi^\mu}(x,t) \coloneq a^2 - \|x\|^2_2$ with scalar $a > 0$ is used to encode $\pi^\mu$ where $\mu \coloneq a - \|x\|_2$). Since negation is only defined over predicates, $\phi$ can be assumed negation-free without loss of generality \cite{sadraddini2015robust} and thus the encoding for negation is ignored. We can then design CBFs for more complex formulas recursively following the encoding rules listed in Table \ref{tab:stl_encodings}, where the rows 2-4 encode temporal operators over predicates, rows 5-7 encode temporal operators over conjunctions, and the last row encodes conjunctions over temporal operators. We emphasize on the use of a smooth underapproximation of the piecewise minimum operator $\min_{i \in \{1, \hdots,p\}}h_i(x, t) \ge \widetilde{\text{min}}(h_1(x, t), \hdots, h_p(x, t)) \coloneq -\ln(\sum_{i = 1}^p\exp(-h_i(x, t)))$. See concrete examples of constructed CBFs in Section \ref{sec:case_study}. We emphasize that constructive synthesis procedures for continuous-time CBFs are discussed in \cite{marchesini2026sampling}.

Note that the CBF encoded following Table \ref{tab:stl_encodings} does not necessarily guarantee $\mathcal{C}_t \neq \emptyset$ for all $t \in \{0, \hdots, H\}$, required for the candidacy in Definition \ref{def:cbf}. In fact, a deletion mechanism can be considered to reduce conservatism \cite{lindemann2018control}, which we summarize here for discrete-time: For any $\phi$ with temporal operators, the general form of the constructed CBF after encoding is $h(x, t) = \widetilde{\text{min}}(h_1(x, t), \hdots, h_p(x, t))$ where each $h_i$ corresponds to a temporal formula $\phi_i = \mathcal{T}_{[t_i, t_{i + 1}]}\varphi$ or $\phi_i = \varphi_1\mathcal{T}_{[t_i, t_{i + 1}]}\varphi_2$ where $\mathcal{T}$ is a temporal operator. Note that after $t_{i + 1}$, $\phi_i$ no longer needs to be enforced and thus $h_i$ can be deleted from the set of arguments to $\widetilde{\text{min}}$ after $t_{i + 1}$. We consider the following deletion strategy: Let $\mathcal{A}_{t_0}(x, t) \coloneq \{h_1(x, t), \hdots, h_p(x, t)\}$ and $\mathcal{A}_{t_{i+1}+1}(x,t) := \mathcal{A}_{t_i} \setminus \{h_i(x,t)\}$ and we set $h(x, t) \coloneq \widetilde{\text{min}}(\mathcal{A}_{t_i}(x, t))$ when $t_i \le t \le t_{i + 1}$. When several functions $h_j$ corresponds to the same switching time, the deletion strategy can remove all such terms simultaneously. We present Theorem \ref{thm:safety_cbf}\appendixswitch{, whose proof is in Appendix \ref{proof:safety_cbf}}.

\begin{theorem}
\label{thm:safety_cbf}
    \textbf{Safety from STL CBF.} Given a formula $\phi$ following \eqref{eq:syntax}. Let $x_0 \in \mathcal{C}_0$. Suppose there exists a valid CBF in the sense of \eqref{eq:cbf_cons} constructed satisfying the conditions in Table \ref{tab:stl_encodings}. Then, enforcing \eqref{eq:cbf_cons} in Theorem \ref{thm:forward} implies $(x, 0) \models \phi$.
\end{theorem}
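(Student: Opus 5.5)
We chain Theorem \ref{thm:forward} with the encoding rules of Table \ref{tab:stl_encodings}, and then do a structural induction over the fragment \eqref{eq:syntax}. First, since $x_0 \in \mathcal{C}_0$ and every applied input satisfies $u_t \in S(x_t, t)$, Theorem \ref{thm:forward} gives $h^\phi(x_t, t) \ge 0$ for all $t \in \{0, \hdots, H\}$. When the deletion mechanism is used, $h^\phi(\cdot,t)$ equals $\widetilde{\text{min}}(\mathcal{A}_{t_i}(x,t))$ on each switching interval. We then check that each deletion only removes arguments, so the value of $\widetilde{\text{min}}$ can only increase at switching times. Hence \eqref{eq:cbf_cons} still chains across switches, and $h^\phi(x_t,t)\ge 0$ holds on the whole horizon. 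The key consequence is this: if $h_i$ is an argument of $\widetilde{\text{min}}$ that is active at time $t$, the underapproximation $\min_i h_i \ge \widetilde{\text{min}}(h_1,\hdots,h_p)$ gives $h_i(x_t,t) \ge 0$.

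Second, we argue case by case following the rows of the table.

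\emph{Predicates.} Row 1 gives $(x,0)\models\pi^\mu$ directly from $h^{\pi^\mu}(x_0,0)\ge 0$.

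\emph{$G_{[a,b]}\pi^\mu$.} Row 2 says $h(x_{t'},t')\ge 0$ implies $\mu(x_{t'})\ge0$ for every $t'\in\sigma(a,b)$. So $(x,t')\models\pi^\mu$ for all such $t'$, which is exactly the semantics of $G$ at $\tau_0=0$.

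\emph{$F_{[a,b]}\pi^\mu$.} Row 3 supplies one witness $t'\in\sigma(a,b)$ at which nonnegativity implies $\mu(x_{t'})\ge 0$. Invariance gives nonnegativity at that $t'$, so $F$ holds.

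\emph{Until.} For rows 4 and 7, the active term $h_2$ (resp.\ $h_3,h_4$) is nonnegative at the witness $t'$, giving $\varphi_2$ at $t'$. The terms $h_1$ (resp.\ $h_1,h_2$) are nonnegative at every $t''\in\sigma(0,t')$, giving $\varphi_1$ on $\sigma(0,t')$. This matches the until semantics with $\tau_1=t'$.

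\emph{Temporal operators over conjunctions.} Rows 5 and 6 are handled the same way, applying the bound $\min \ge \widetilde{\text{min}}$ componentwise.

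\emph{Top-level conjunction.} The last row follows because each $h_i$ is an argument of the outer $\widetilde{\text{min}}$ and is therefore nonnegative whenever it is active. The inductive hypothesis then gives $(x,0)\models\phi_i$ for each $i$, and hence their conjunction.

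The main obstacle is the interaction between the deletion mechanism and the time windows. We must make sure $h_i$ is not deleted before the times at which Table \ref{tab:stl_encodings} needs its nonnegativity, namely all of $\sigma(a,b)$, the witness $t'$, or $\sigma(0,t')$. We handle this by noting that $h_i$ is removed only after $t_{i+1}$, the right endpoint of its operator's window. For until, we must also check that the relevant window $\sigma(0,t')$ lies before $t_{i+1}\ge t'$; that holds because $t'\le b=t_{i+1}$. A second subtlety is the discontinuity of $h^\phi$ at switching times. Forward invariance across a switch needs $h^\phi(x,t^+)\ge h^\phi(x,t)$, and this follows from monotonicity of $-\ln\sum\exp(-\cdot)$ when an argument is dropped. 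With these two points checked, the induction closes and yields $(x,0)\models\phi$.
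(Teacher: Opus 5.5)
Your proposal is correct and follows essentially the same route as the paper's proof. Both arguments use forward invariance from Theorem~\ref{thm:forward} on each switching interval, and the monotonicity of $\widetilde{\text{min}}$ under deletion of arguments so that nonnegativity carries across switches. Both then use the lower bound $\min_i h_i \ge \widetilde{\text{min}}(h_1,\hdots,h_p)$ to get each component's nonnegativity up to and including $t_{i+1}$, followed by a row-by-row structural induction over Table~\ref{tab:stl_encodings}. Your explicit check that $h_i$ is not deleted before its witness $t'$ or the window $\sigma(0,t')$ (since $t' \le b = t_{i+1}$) only spells out what the paper compresses into ``for all $t\in\{t_i,\ldots,t_{i+1}\}$, including the endpoint $t_{i+1}$.''
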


Theorem \ref{thm:safety_cbf} applies to a controller that enforces the exact constraint \eqref{eq:cbf_cons}. It does not directly establish STL satisfaction for Algorithm \ref{alg:algorithm}, which replaces the constraint with a first order approximation, which we discuss later in \eqref{eq:cons_replaced}. Consider an example with $x\in \mathbb{R}$, $x_0 = -2$, and $\phi \coloneq G_{[0, 10]}x \ge -4 \wedge F_{[0, 5]}x \ge 0$. We construct a candidate CBF following the rules of Table \ref{tab:stl_encodings} where $h^\phi(x,t) \coloneq \widetilde{\text{min}}(x + 4, \kappa(x, t))$ with $\kappa(x, t) \coloneq x + 5 - t$ removed from the arguments of $\widetilde{\text{min}}$ at $t = 6$. Note that our construction has no guarantee that $h^\phi$ is valid. We make Assumption \ref{assumption:validity}, standard for CBF in the MPPI literature \cite{parwana2025br, yin2023shield}.

\begin{assumption}
\label{assumption:validity}
    $S(x_t, t) \neq \emptyset, \forall t \in \{0, \hdots,  H- 1\}$.
\end{assumption}
It is easy to show that $S(x_t, t)$ is nonempty if $\mathcal{U} = \mathbb{R}^m$ and $f$ is feedback-equivalent to a single integrator $x_{t + 1} = x_t + u_t$ when $C_t$ is nonempty. Our algorithm requires $\frac{\partial h^\phi}{\partial x}g$ to be not identically zero (i.e., the CBF is of $1$st order). For formulas that result in high-order CBFs, we reduce the order by substituting the forward dynamics following \cite{parwana2025br}. See example in Case D of Section \ref{subsec:artificial}.
\subsection{Model Predictive Path Integral}
MPPI is a receding-horizon sampling-based strategy, which we use to solve for $v^*$ in Problem \ref{prob:formulation}. Given a fixed horizon $\mathcal{H}$, at each time $t$, we sample $K$ control input perturbations $\omega^{k, (t)} = (\omega^{k, (t)}_t, \hdots, \omega^{k, (t)}_{t+ \mathcal{H} - 1})$ where $\omega^{k, (t)} \in \mathbb{R}^m \times \mathcal{H}$ for $k \in \{1, \hdots, K\}$, where $K$ is a sampling size. Further, we require $\omega_\tau^{k, (t)} \sim \mathcal{N}(0, \Sigma_\omega)$ to be sampled from a zero-centered Gaussian distribution with a noise covariance of $\Sigma_\omega$. Based on each sample, we can obtain a sample rollout $x^k$ up to time $t + \mathcal{H}$ using a sample control sequence $u^{k, (t)} = v^{(t)} + \omega^{k, (t)} \in \mathbb{R}^{m \times \mathcal{H}}$ where $v^{(t)}$ is a nominal control input. We can then compute a sample cost
\begin{align}
    \label{eq:sample_cost}
    J^{k, (t)} \coloneq& E(x^{k, (t)}_{t + \mathcal{H}}) + \sum_{\tau = t}^{t + \mathcal{H} - 1}[q(x^{k, (t)}_\tau, u^{k, (t)}_\tau) \\&+\gamma (v^{(t)}_\tau)^T\Sigma_\omega^{-1}\omega_\tau^{k, (t)}], \nonumber
\end{align}
where\footnote{Note that some work \cite{parwana2025br, yin2023shield} replace $\gamma (v^{(t)}_\tau)^T\Sigma_\omega^{-1}\omega_\tau^{k, (t)}$ with other action perturbation cost functions in \eqref{eq:sample_cost}. We use \eqref{eq:sample_cost} for its consistency with the original formulation in MPPI \cite{williams2018information} and use \eqref{eq:sample_cost} consistently across baselines in Section \ref{sec:case_study} for fair comparison.} $\gamma \ge 0$ is a regularization factor. Note that inherently $J^{k, (t)}$ does not contain any safety information pertinent to $\phi$. As inspired by \cite{yin2023shield}, we incorporate the CBF information in the cost function by considering the safety augmented cost, 
\begin{align*}
    J^{k, (t)}_\phi \coloneq J^{k, (t)} + \eta \sum_{\tau = t}^{t + \mathcal{H} - 1} C_{\textup{CBF}}(x^{k, (t)}, \tau), 
\end{align*} 
where $C_{\textup{CBF}}(x^{k, (t)}, \tau) \coloneq \max\{0, (1 - \alpha)h^\phi(x^{k, (t)}_{\tau}, \tau)-h^\phi(x^{k, (t)}_{\tau + 1}, {\tau + 1})\}$ denotes the CBF constraint violation. We remark here $\eta \ge 0$ is a tunable hyperparameter. We also remark that one can consider the robust semantics $\rho^\phi$ as a part of the cost (as considered in \cite{baldini2024don}), but for general STL this requires a shrinking-horizon rollout where $\mathcal{H} \coloneq H - t$, which has a high computational cost. Standard in literature, the next step is to assign a weight on each of the sample cost based on the performance using $W^{k, (t)} \coloneq \exp[-\frac{1}{\lambda}(J^{k, (t)}_\phi - \min_{i \in \{1, \hdots, K\}}J^{i, (t)}_\phi)]$, with $\lambda\ge \gamma$, and the optimal control sequence is expressed as a weighted rollout $v^{+, (t)} = \sum_{k = 1}^KW^{k, (t)}u^{k, (t)}/\sum_{k = 1}^KW^{k, (t)}$. Despite $C_{\textup{CBF}}$, $v^{+, (t)}$ is not guaranteed to satisfy the CBF constraint. Standard procedure \cite{yin2023shield} considers a safety filter as adapted below for our setting of time-varying CBF,
\begin{align}
\label{eq:cbf_opt}
    &{v^s_\tau}^{*, (t)} \coloneq \arg\min_{v^s_\tau \in \mathcal{U}} \|v^{+, (t)}_\tau - v^s_\tau\|\\
    \text{s.t. }& h^\phi(f(x^{(t)}_\tau, v^s_\tau), \tau + 1) - h^\phi(x^{(t)}_\tau, \tau) \ge -\alpha h^\phi(x^{(t)}_\tau, \tau), \nonumber
\end{align}
where $\tau \in \{t, \hdots, t + N\}$ and $0 \le N \le \mathcal{H} - 1$. Let ${v^s_\tau}^{*, (t)} = v_\tau^{+, (t)}$ for $\tau  > t + N$. One can forward ${v^s_t}^{*, (t)}$ to the dynamics and plan in receding-horizon with MPPI. One can use ${v_{t + 1}^s}^{*, (t)}, \hdots, {v_{t + \mathcal{H} - 1}^s}^{*, (t)}$ as a warm-start for time $t + 1$.  Note that \eqref{eq:cbf_opt} is generally nonconvex in discrete-time. The standard procedure proposed in \cite{yin2023shield} to approach \eqref{eq:cbf_opt} is to solve the following through a first-order optimizer ${v_{t:t + N}^s}^{*, (t)} \approx \arg\max \sum_{\tau = t}^{t + N} \min\{0, h^\phi(x^{(t)}_{\tau + 1}, \tau + 1) + (\alpha - 1)h^\phi(x^{(t)}_\tau, \tau)\}$
which guarantees safety when the cost is zero and when $\mathcal{U} = \mathbb{R}^m$. Usually this is not run until convergence in real-time. Moreover, the solution is not guaranteed optimal to \eqref{eq:cbf_opt} even after convergence. Alternatively, a closed-form solution is proposed in \cite{parwana2025br} for time-invariant CBF with $\mathcal{U} = \mathbb{R}^m$ using first-order Taylor expansion. Motivated by \cite{parwana2025br}, we propose an approximated quadratic program solved in closed-form when $\mathcal{U} = \mathbb{R}^m$.
\begin{algorithm}
	\caption{Safety-Aware-STL-MPPI.} \label{alg:algorithm} 
	\begin{algorithmic}[1]
        \State We are given $\phi$, $\mathcal{H}$, $f$, $E$, and $q$. 
        \State We have constructed $h^\phi$ following Section \ref{subsec:stl_cbf} before the closed-loop planning.
        \State Suppose the current time is $t$. We are given hyperparameters $\Sigma_\omega, K, \eta, \gamma, \lambda, N$.
        \For{$k \in \{1, \hdots, K\}$ \text{ in parallel }}
        \State Sample $\omega^{k, (t)} \gets(\omega^{k, (t)}_t, \hdots, \omega^{k, (t)}_{t + \mathcal{H} - 1})$,
        \State where $\omega^{k, (t)}_\tau \sim \mathcal{N}(0, \Sigma_\omega)$.
        \State $u^{k, (t)} = v^{(t)} + \omega^{k, (t)}$; $J^{k, (t)}_\phi \gets 0$.
        \For{$\tau \in \{t, \hdots, t + \mathcal{H} - 1\}$}
        \State $x_{\tau + 1}^{k, (t)} \gets f(x_\tau^{k, (t)}, u^{k, (t)}_\tau)$.
         \State $\begin{aligned}
            J^{k,(t)}_\phi \gets &J^{k,(t)}_\phi +
             q(x_\tau^{k,(t)},\, u_\tau^{k,(t)}) \\&
            + \gamma\,(v^{(t)}_\tau)^T \Sigma_\omega^{-1}\omega_\tau^{k,(t)} \\&+ \eta\,C_{\text{CBF}}(x^{k,(t)},\,\tau).
\end{aligned}$
        \EndFor
        \State $J^{k, (t)}_\phi \gets J^{k, (t)}_\phi + E(x_{t + \mathcal{H}}^{k, (t)})$.
        \EndFor
        \State $W^{k, (t)} \coloneq \exp[-\frac{1}{\lambda}(J^{k, (t)}_\phi - \min_{i \in \{1, \hdots, K\}}J^{i, (t)}_\phi)]$
        \State $v^{+, (t)} = \sum_{k = 1}^KW^{k, (t)}u^{k, (t)}/\sum_{k = 1}^KW^{k, (t)}$.
        \State $\{\hat{v}^{s, (t)}_t,\hdots, \hat{v}^{s, (t)}_{t + N}\} \gets \eqref{eq:qp} \text{ or } \eqref{eq:qp_closed}$.
        \State Warm start with the new weighted rollout.
        \State Execute $\hat{v}^{s, (t)}_t$.
	\end{algorithmic} 
\end{algorithm}
\subsection{Solving the CBF Constraint with Approximation}
\label{subsec:approx}
\begin{figure*}[t]
  \centering
  \begin{subfigure}{0.245\textwidth}
    \centering
    \includegraphics[width=\linewidth]{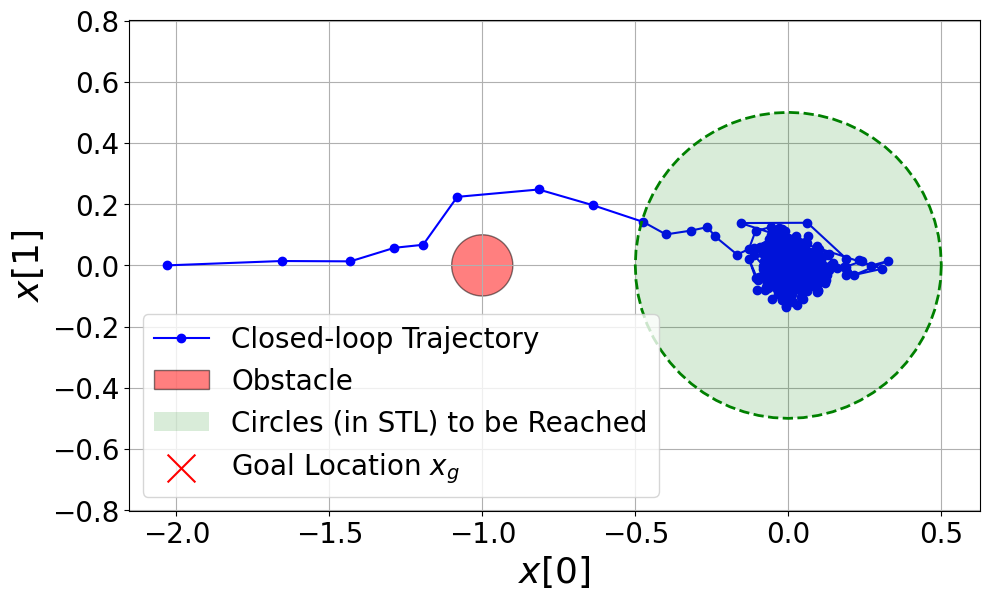}
    \caption{Case A}
    \label{fig:caseA}
  \end{subfigure}
  \begin{subfigure}{0.245\textwidth}
    \centering
    \includegraphics[width=\linewidth]{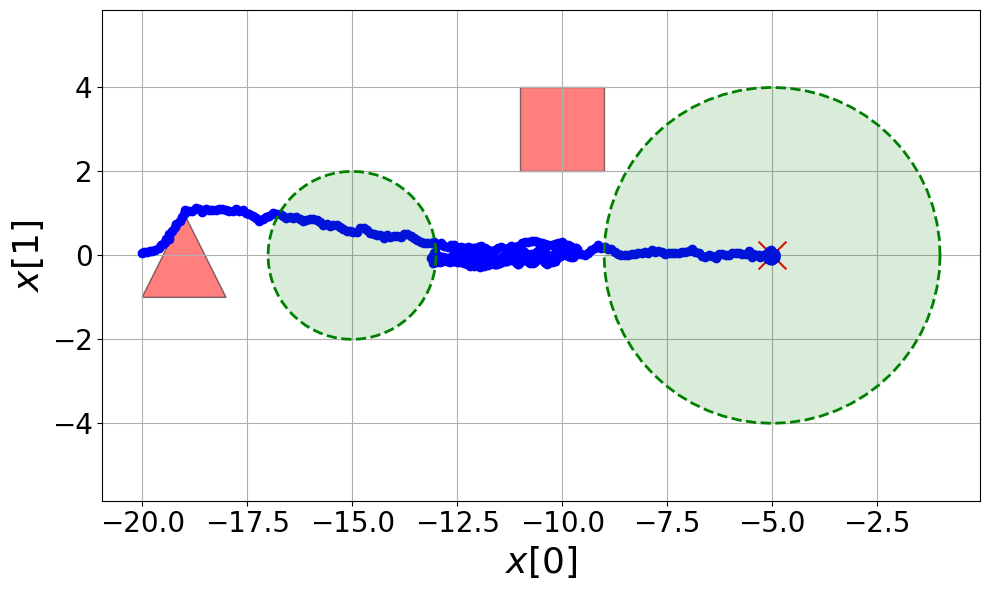}
    \caption{Case B}
    \label{fig:caseB}
  \end{subfigure}
  \begin{subfigure}{0.245\textwidth}
    \centering
    \includegraphics[width=\linewidth]{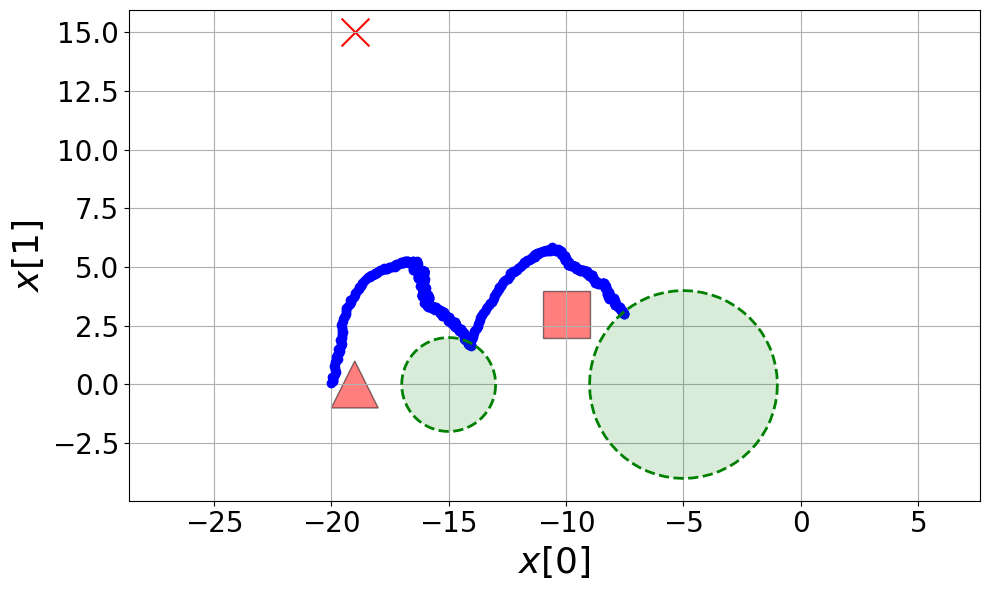}
    \caption{Case C}
    \label{fig:caseC}
  \end{subfigure}
  \begin{subfigure}{0.245\textwidth}
    \centering
    \includegraphics[width=\linewidth]{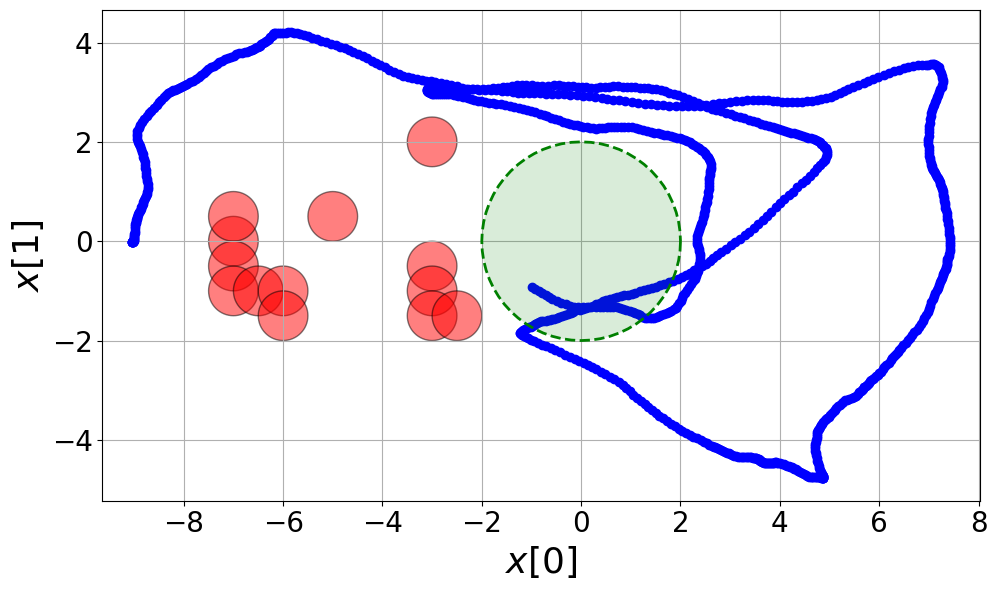}
    \caption{Case D}
    \label{fig:caseD}
  \end{subfigure}
  \caption{Example Trajectories Planned with safety-aware-stl-mppi (Limited Actuation).}
  \label{fig:succ_demo_cases}
  \vspace{-10pt}
\end{figure*}

Since $h^\phi(x, t)$ is differentiable in $x$, we have by first-order Taylor expansion around $x_
\tau$ that $h^\phi(f(x^{(t)}_\tau, v_\tau^s), \tau + 1) \approx h^\phi(x^{(t)}_\tau, \tau + 1) + \left.\frac{\partial h^\phi}{\partial x}^T\right|_{(x^{(t)}_\tau, \tau + 1)}[f(x_\tau^{(t)}, v^s_\tau) - x^{(t)}_\tau]$. Suppose $A(x_\tau^{(t)}, \tau) \coloneq \left.\frac{\partial h^\phi}{\partial x}\right|_{(x_\tau^{(t)}, \tau + 1)} \in \mathbb{R}^n$. Given the input-affine dynamics $f$, we can approximate the constraint in \eqref{eq:cbf_opt} with
\begin{align}
    \label{eq:cons_replaced}
    &h^\phi(x_\tau^{(t)}, \tau + 1) + A(x_\tau^{(t)}, \tau)^T[(f_0(x_\tau^{(t)}) - x_\tau^{(t)}) \\&+ g(x_\tau^{(t)})v^s_\tau]  \ge (1 -\alpha) h^\phi(x_\tau^{(t)}, \tau), \nonumber
\end{align}
which is affine in $v_\tau^s$. We can therefore attain 
\begin{align}
    \label{eq:replaced}
    \hat{v}^{s, (t)}_\tau\coloneq \arg\min_{v^s_\tau \in \mathcal{U}} \{\|v^{+, (t)}_\tau - v^s_\tau\| \text{ subject to } \eqref{eq:cons_replaced}\},
\end{align}
where ${v^s_\tau}^{*, (t)} \approx \hat{v}^{s, (t)}_\tau$. Consider all active actuation limits $e_i^Tv_\tau^s \le u_{\max, i}$ and $-e_j^Tv_\tau^s \le -u_{\min, j}, \forall (i, j) \in \mathcal{A}$ where $\mathcal{A} \subseteq \{1, \hdots, m\}^2$. Let $\Pi \coloneq (a_{\textup{cbf}}^T(x_\tau^{(t)}, \tau), e_i^T, \hdots,-e_j^T, \hdots)^T$ and $b \coloneq (b_{\textup{cbf}}(x_\tau^{(t)}, \tau), u_{\max, i}, \hdots, -u_{\min, i}, \hdots)$ where 
\begin{align*}
    &a_{\textup{cbf}}^T(x_\tau^{(t)}, \tau) \coloneq -A(x_\tau^{(t)}, \tau)^Tg(x_\tau^{(t)});\\&
    b_{\textup{cbf}}(x_\tau^{(t)}, \tau) \coloneq A(x_\tau^{(t)}, \tau)^T(f_0(x_\tau^{(t)}) - x_\tau^{(t)}) +\\&h^\phi(x_\tau^{(t)}, \tau + 1) + (\alpha - 1) h^\phi(x_\tau^{(t)}, \tau).
\end{align*} 
Then, the constraints in \eqref{eq:cons_replaced} is equivalent to $\Pi v_\tau^s \le b$. We can therefore write $\eqref{eq:replaced}$ in standard Quadratic Program (QP)
\begin{align}
\label{eq:qp}
    \hat{v}^{s, (t)}_\tau =& \arg\min\{\frac{1}{2}{(v_\tau^s - v_\tau^{+, (t)})}^TP(v_\tau^s - v_\tau^{+, (t)}) \\&\mid \Pi v_\tau^s \le b\}, \nonumber
\end{align}
where $P \coloneq I_m$ is the $m$ dimensional identity matrix. The QP in \eqref{eq:qp} can be solved efficiently at runtime and admits a closed form solution when $\mathcal{U} = \mathbb{R}^m$, in which case we have
\begin{align}
\label{eq:qp_closed}
    \hat{v}^{s, (t)}_\tau =& v_\tau^{+, (t)} - a_{\textup{cbf}}(x_\tau^{(t)}, \tau)[\max\{0, a_{\textup{cbf}}^T(x_\tau^{(t)}, \tau)v_\tau^+ \\&- b_{\textup{cbf}}(x_\tau^{(t)}, \tau)\}/{\|a_{\textup{cbf}}(x_\tau^{(t)}, \tau)\|_2^2}].\nonumber
\end{align}

The derivation from \eqref{eq:qp} to \eqref{eq:qp_closed} when under unlimited actuation is standard, but we present it for completeness in Appendix \ref{appendix:qp}. We summarize the safety-aware-stl-mppi in Algorithm \ref{alg:algorithm}, which we call in closed-loop. Naturally, along the closed-loop trajectory, we assume $\hat{S}(x_\tau, \tau) \neq \emptyset$ where $\hat{S}(x, \tau) \coloneq \{v \in U \mid a_{\text{cbf}}(x, \tau)^Tv \le b_{\text{cbf}}(x, \tau)\}$ and that $a_{\text{cbf}}(x_\tau^{(t)}, \tau) \neq 0$ at every state where \eqref{eq:qp_closed} is applied. Due to the approximation error, perfect safety is not guaranteed theoretically.\appendixswitch{ We discuss how one can address this by optional robustification in Theorem \ref{thm:tighten_local} in the Appendix.}{ We discuss how one can address this by robustification in the full version of this paper \cite{}.}
\begin{table*}[t]
\centering

\begin{subtable}{\textwidth}
\centering
\caption{Limited actuation}
\label{tab:limited_act}
\begin{tabular}{l|cc|cc|cc|cc}
\toprule
 & \multicolumn{2}{c|}{Case A}
 & \multicolumn{2}{c|}{Case B}
 & \multicolumn{2}{c|}{Case C}
 & \multicolumn{2}{c}{Case D} \\
\cmidrule(lr){2-9}
Method
& SR $\uparrow$ & Timing $\downarrow$
& SR $\uparrow$ & Timing $\downarrow$
& SR $\uparrow$ & Timing $\downarrow$
& SR $\uparrow$ & Timing $\downarrow$ \\
\midrule
\rowcolor{blue!7}
Safety-Aware-STL-MPPI (Ours)   & 100\% & 5.26 & 100\% & 8.80 & 100\% & 8.84 & 100\% &17.93 \\
Vanilla MPPI           & 100\% & 1.35 & 100\% & 3.32 & 0\% & 3.33 & 0\% & 5.97 \\
Reach-avoid MPPI       & 100\% & 5.92 & 100\% &7.31  & 0\% &  7.32 & 25\%  & 17.05 \\
Robustness-based MPPI  &  100\% & 190.35 & 95\% & 90.74 & 90\% & 90.84 & 55\% & 1060.55 \\
Penalty-based MPPI     & 100\% & 191.16 & 80\% & 91.21 & 85\% &  91.34 & 60\% &  1057.51 \\
\bottomrule
\end{tabular}
\end{subtable}

\begin{subtable}{\textwidth}
\centering
\caption{Unlimited actuation}
\label{tab:unlimited_act}
\begin{tabular}{l|cc|cc|cc|cc}
\toprule
 & \multicolumn{2}{c|}{Case A}
 & \multicolumn{2}{c|}{Case B}
 & \multicolumn{2}{c|}{Case C}
 & \multicolumn{2}{c}{Case D} \\
\cmidrule(lr){2-9}
Method
& SR $\uparrow$ & Timing $\downarrow$
& SR $\uparrow$ & Timing $\downarrow$
& SR $\uparrow$ & Timing $\downarrow$
& SR $\uparrow$ & Timing $\downarrow$ \\
\midrule
\rowcolor{blue!7}
Safety-Aware-STL-MPPI (Ours)   & 100\% & 3.80 & 100\% & 7.40 & 100\% & 7.37 & 100\% &16.39 \\
Vanilla MPPI           & 100\% & 1.55 & 100\% & 3.31 & 0\% & 3.31 & 0\% & 5.90 \\
Reach-avoid MPPI       & 100\% & 3.47 & 100\% &5.19  & 0\% &  5.21 & 25\%  & 14.55 \\
Robustness-based MPPI  &  100\% & 202.87 & 100\% & 90.32 & 100\% & 90.31 & 55\% & 1057.29 \\
Penalty-based MPPI     & 95\% & 203.45 & 100\% & 90.65 & 95\% &  90.59 & 60\% &  1057.07 \\
\bottomrule
\end{tabular}
\end{subtable}
\caption{Section \ref{subsec:artificial} Evaluation Results.}
\label{tab:mars_rov}
\vspace{-15pt}
\end{table*}
\section{Evaluation}
\label{sec:case_study}
We evaluate safety-aware-stl-mppi via four artificial Mars rover path planning case studies and a photorealistic simulation with a quadcopter. In the rover case studies, we compare the algorithm with four state-of-the-art baseline models (detailed in Section \ref{subsec:artificial}) and discuss our advantage in safety and low computation cost. We then simulate a drone inspection task in Nvidia Isaac Lab \cite{makoviychuk2021isaac, mittal2023orbit} with safety-aware-stl-mppi. All experiments are run on a Dell G16 7630 Model with a 13th Gen Intel(R) Core(TM) i9 processor.

\subsection{Mars Rover Case Studies}
\label{subsec:artificial}
We consider artificial case studies where we simulate safety-critical missions of a Mars rover. In each mission, the rover is subject to a safety constraint defined in STL (e.g., to return to home base before the battery dies out and to avoid hazardous regions at all time). We consider four case studies with different dynamics, goals, and safety specifications, which we evaluate with and without actuation limit. In each of cases A, B, and C, we consider a goal location to inspect, incentivized by the terminal cost $E(x^{k, (t)}_{t+ \mathcal{H}}) \coloneq \|x^{k, (t)}_{t+ \mathcal{H}} - x_g\|_2$ where $x_g \in \mathbb{R}^2$ denotes the goal location. In case D, we simply set $E(x_{t+\mathcal{H}}^{k, (t)}) \coloneq 0$. Let $q(x^{(t)}_\tau, u_\tau) \coloneq 0$ for all cases. The purpose of the hypothetical goal locations is to show that safety-aware-stl-mppi is robust to adversarial (and non-adversarial) cost objectives. We expect safety-aware-stl-mppi to respect the safety constraint even if the goal is not achievable in due time. All unit of time are seconds, which we highlight with the $s$ postfix in the metric interval of the STL specifications. Since the experiment runs in real-time, we set a sampling time of $\Delta t \coloneq 0.01s$ between the two time index $t$ and $t + 1$. In all cases, the running costs are $0$. For validation, we draw initial positions randomly as specified below and show examples of successful trajectories from safety-aware-stl-mppi for all cases (with actuation limits) in Figure \ref{fig:succ_demo_cases}\footnote{We emphasize that in Figure \ref{fig:caseB}, we plot exact traingle and squares instead of smooth approximations.}. 

\begin{itemize}
    \item Case A: Consider a nonlinear dynamics $x_{t+1} = \sin(x_t) + u_t$ where $x_t, u_t \in \mathbb{R}^2$ and $x_0 \sim (U(-3, -2), 0)^T$ where $U(a, b)$ denotes the uniform distribution over $[a, b)$ and $x_g \coloneq(0, 0)$. We are given the safety specification $\phi_A \coloneq G_{[0s, 10s]}\|x-(-1,0)^T\|_2 \ge 0.1 \wedge
        F_{[0s, 10s]}0.5 \ge \|x - (0, 0)^T\|_2,$ specifying the robot to at all time avoid a circle of radius $0.1$ centered at $(-1, 0)^T$ and reach a circle of radius $0.5$ centered at the origin between $0$ and $10$ seconds. The actuation limit is $(-5, -5)^T \preccurlyeq u_t \preccurlyeq (5, 5)^T$.
    \item Case B: Consider a single integrator dynamics $x_{t + 1} = x_t + u_t$ where $x_t, u_t \in \mathbb{R}^2$, $x_0 \sim (-20, U(0, 2))^T$ and a goal location $x_g \coloneq (-5, 0)^T$. We are given the following safety specification, $\phi_B \coloneq G_{[0s, 10s]}T(x) \ge 0 \wedge G_{[0s, 10s]}S(x) \ge 0\wedge F_{[0s, 7s]}4 - [(x[1]^2+(x[0] + 15)^2] \ge 0\wedge F_{[0s, 10s]}16 - [x[1]^2 + (x[0] + 5)^2] \ge 0,$
    where $T(x) = \widetilde{\text{max}}(x[1] + 2x[0] + 37, -1-x[1], -2x[0]+ x[1]-39)$, $S(x) \coloneq \widetilde{\text{max}}(-11-x[0], x[0] + 9, x[1] - 4, 2 - x[1])$, and $\widetilde{\text{max}}(a_1, a_2,\hdots) \coloneq \frac{1}{\Omega} \log (\exp(\Omega a_1) + \exp(\Omega a_2) + \hdots)$ with $\Omega > 0$ denotes smooth maximum. We set $\Omega\coloneq 1000$. The formula specifies that the robot should always avoid a triangle with vertices $(-20, -1)^T, (-19, 1)^T, (-18, -1)^T$ and a square with vertices $(-11, 2)^T, (-9, 2)^T, (-11, 4)^T, (-9, 4)^T$. We require the robot to reach a circle of radius $2$ centered at $(-15, 0)^T$ in time interval $[0s, 7s]$ and a circle of radius $4$ centered at $(-5, 0)^T$ in time interval $[0s, 10s]$. The actuation limit is $(-0.08, -0.08)^T \preccurlyeq u_t \preccurlyeq (0.08, 0.08)^T$.
    \item Case C: Case C is identical to Case B but with the goal location $x_g \coloneq (-19, 15)^T$.
    \item Case D: We consider a double integrator dynamics \(x_{t+1} =  
    \begin{bmatrix} p_{t+1} \\ v_{t+1} \end{bmatrix}
    =\begin{bmatrix} I_2 & I_2 \\ 0 & I_2 \end{bmatrix}
    \begin{bmatrix} p_t \\ v_t \end{bmatrix}
    +
    \begin{bmatrix} 0 \\ I_2 \end{bmatrix} u_t,
    \) where $I_2$ is a $2-$dimensional identity matrix. We consider $x_0 \sim (U(-9.5, -9), 0, 0, 0)^T$ and $13$ circular obstacles, shown in Figure \ref{fig:caseD}. Let the centers of each obstacle $i$ be $c^i$, and we consider the safety specification $\phi_D \coloneq (\wedge_{i = 1}^{13}G_{[0s, 10s]}\|p - c^i\| \ge 0.5) \wedge F_{[9s, 10s]} 2 - \|p - (0, 0)^T\| \ge 0$, which means that the rover needs to avoid all obstacles at all time and reach a circle of radius $2$ centered at the origin in time interval $[9s, 10s]$. The actuation limit is $(-20, -20)^T \preccurlyeq u_t \preccurlyeq (20, 20)^T$.
\end{itemize}
We construct candidate CBFs:
\begin{align*}
    &h^{\phi_A}(x, t) \coloneq \widetilde{\text{min}}((x_t[0] + 1)^2 + x_t[1]^2 - 0.1^2,\\ &-\frac{7}{8}t\Delta t + 9-(x_t[0]^2 + x_t[1]^2)),\\
    &h^{\phi_B}(x, t) \coloneq \widetilde{\text{min}}(\widetilde{\text{min}}(T(x_t), S(x_t)), \widetilde{\text{min}}(-\frac{32}{7}
t\Delta t+36-\\&x_t[1]^2-(x[0]+15)^2,-24t\Delta t+256-(x_t[0]+5)^2-x_t[1]^2)),\\
    &h^{\phi_D}(x, t) \coloneq \widetilde{\text{min}}(\widetilde{\text{min}}_{i = 1}^{12}((x_{t+1}[0] - c^i[0])^2 \\&+ (x_{t+1}[1] - c^i[1])^2-0.5^2, (x_{t+1}[0] - c^{i + 1}[0])^2 + (x_{t+1}[1]\\& - c^{i + 1}[1])^2-0.5^2),100 - \frac{96t\Delta t}{10 - \Delta t} - (x_{t+1}[0]^2 + x_{t+1}[1]^2)),
\end{align*}
where $\widetilde{\text{min}}_{i = 1}^{12}$ means that $\widetilde{\text{min}}$ is applied $12$ times. We switch $h^{\phi_B}$ to $\widetilde{\text{min}}(\widetilde{\text{min}}(T(x_t), S(x_t)), -24t\Delta t+256-(x_t[0]+5)^2-x_t[1]^2)$ at $t\Delta t =7$\footnote{The reported results instantiate the switch at $t\Delta t = 7$ rather than $7 + \Delta t$; this affects only the final discretization step of the $F_{[0s,7s]}$ component. Reported robustness values are computed by an independent monitor under the exact closed-interval semantics and are therefore unaffected.}. Note that $h^{\phi_C} \coloneq h^{\phi_B}$. Case C is more difficult than case B in that $x_g$ may not be achievable given the specification, and the robot needs to obey the safety constraint even if it may not reach close to the goal location.
\begin{figure*}[t]
    \centering
  \begin{subfigure}[t]{0.257\textwidth}
    \centering
    \includegraphics[max width=\textwidth,keepaspectratio]{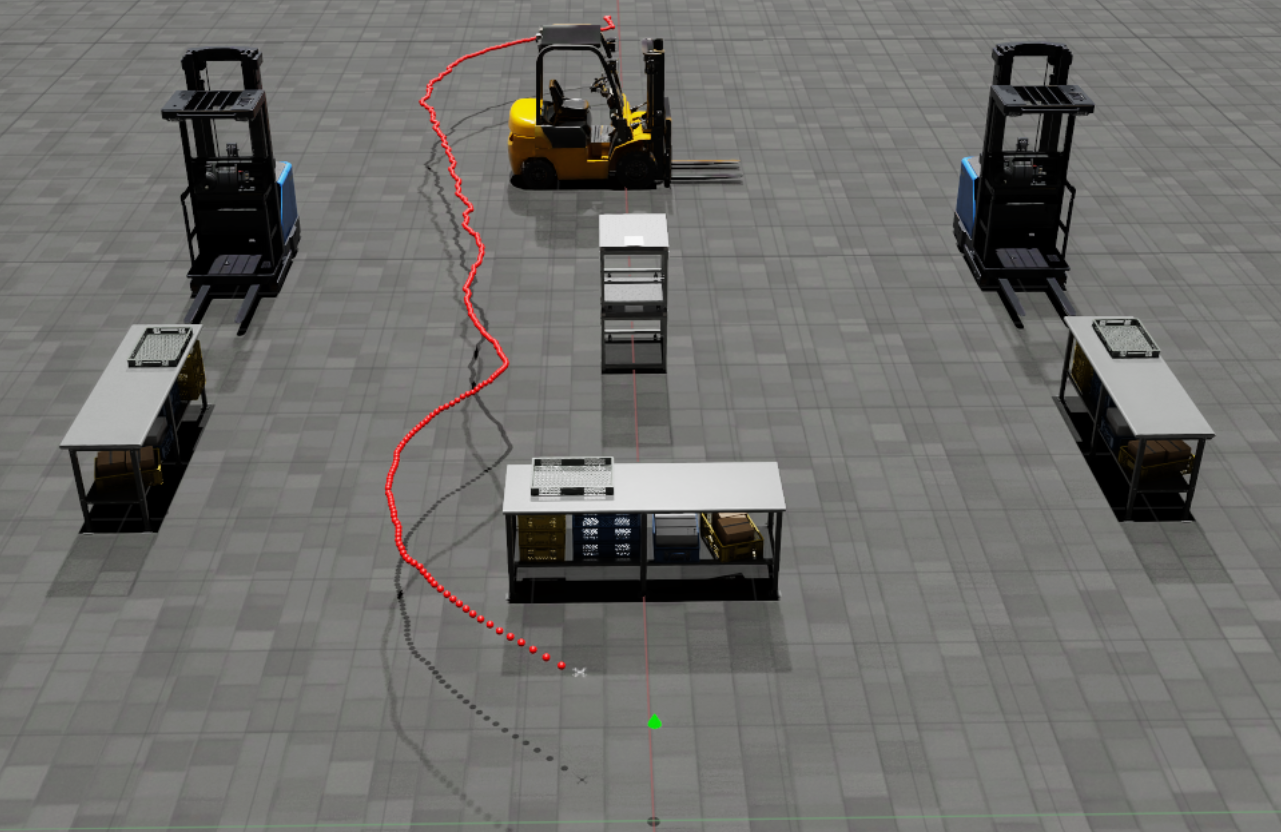}
    \caption{Example Planned Trajectory}
    \label{fig:isaac_planned}
  \end{subfigure}\hspace{0.01\textwidth}
  \begin{subfigure}[t]{0.323\textwidth}
    \centering
    \includegraphics[max width=\textwidth,keepaspectratio]{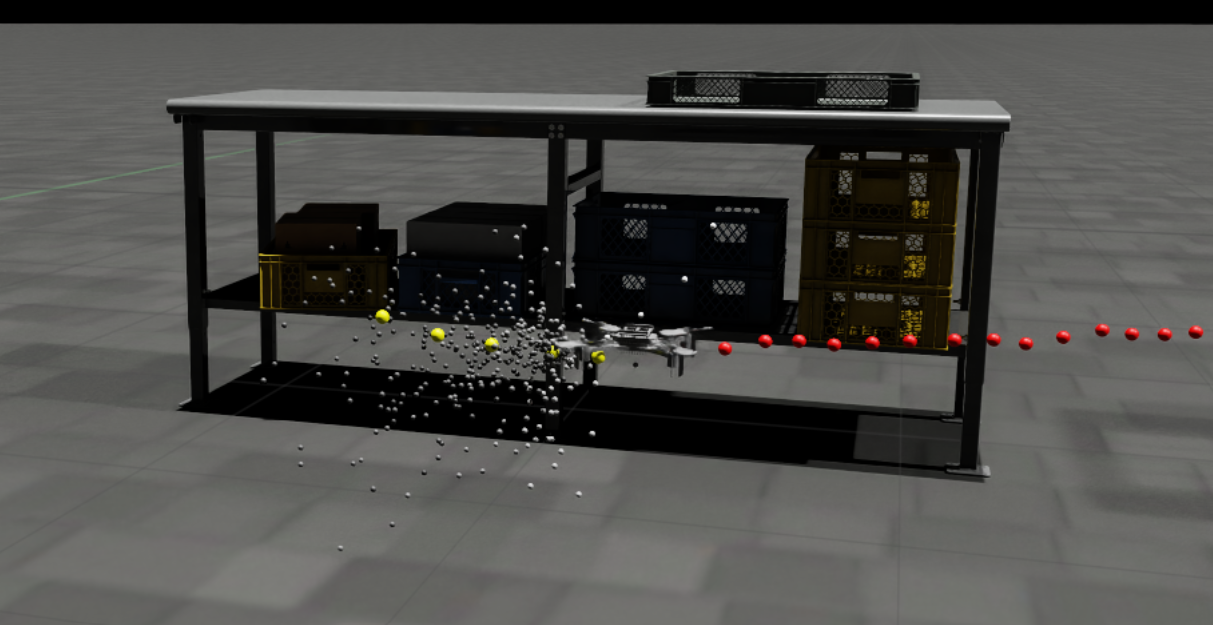}
    \caption{Example Planned Rollouts}
    \label{fig:isaac_rollout}
  \end{subfigure}
  \caption{Quadcopter Simulation Example}
  \label{fig:isaac}
  \vspace{-15pt}
\end{figure*}

\textbf{Baseline Models.} We compare our results with four other baseline models, all adapted to PyTorch \cite{paszke2019pytorch} for consistent comparison. In \textbf{vanilla MPPI} \cite{williams2016aggressive}, we consider constant penalties respectively for each violation of obstacle avoidance and for when the robot does not satisfy the reach requirement at time $\tau$, where $\tau \in \{t_1, \hdots, t_2\}$ with $[t_1, t_2]$ denoting the STL metric interval for each time operator in sampling index\footnote{In Case D, obstacle-avoidance predicates for Vanilla MPPI are evaluated on a one-step-ahead kinematic surrogate, consistent with the CBF treatment for higher-order dynamics.}. In \textbf{reach-avoid MPPI} \cite{parwana2024model}, we consider a cost function $c_r(x^k_\tau) \coloneq k_r (\|p_t- p_r\|^2 - r_r^2)$ for each center of the circles to be reached $p_r$ with a sufficient radius $r_r$ and a cost function $c_o(x^k_\tau)\coloneq k_o/(\max(\|p_t - p_o\|, \epsilon_o)$ for each obstacle center $p_o$ in the STL specifications. Note that $p_r$ is not to be confused with $x_g$. A composite cost function $Q(x_{t:t + \mathcal{H}}) \coloneq \max(Q_{r_1}, Q_{r_2}, \hdots, Q_{o_1}, Q_{o_2})$, where $\{r_1, \hdots\}$ and $\{o_1, \hdots\}$ denote the centers of circles to be reached and of obstacles in the STL, is considered with $Q_o \coloneq \max\{c_o(x_{t_1}), \hdots, c_o(x_{t_2})\}$ and $Q_r \coloneq \min\{c_r(x_{t_1}), \hdots, c_r(x_{t_2})\}$, where $[t_1, t_2]$ are metric intervals for the time operators\footnote{For numerical robustness in discrete-time, we adopt a one-sample temporal buffer when $t_1 > 0$, evaluating $Q_r$ over $[t_1 - \Delta t, t_2]$.}. We add the composite cost function to the rollout costs. We use $\nabla_xh(f(x_t, u_t))(f(x_t, u_t)- x_t) \ge  - \alpha h(x_t)$ as a time-invariant CBF filter. In \textbf{penalty-based MPPI} (i.e. MPPI-STL) \cite{baldini2024don}, a penalty function, $I(x^k)$, for the violation of the safety specification based on STL robustness is augmented to the rollout costs. Before calling MPPI, the paper relies on a hybrid $A^*$ guiding prior factored into the cost function. We disable this pre-planning step to compare the core MPPI algorithms under identical priors. The paper did not specify the exact penalty, so we choose $I(x^k) \coloneq -\rho^\phi(x^k) 1(x^k \not\models \phi)$, where $1$ is an indicator function and $\rho^\phi$ is the robust semantics over $\phi$. An open loop path integral optimization method, which we here denote as \textbf{robustness-based MPPI}, is proposed in \cite{halder2025trajectory} with $I(x^k)\coloneq -\rho^\phi(x^k)$. The paper also proposes a hyperparameter tuning procedure, which we have to disregard for a real-time closed-loop implementation. Note that both penalty-based and robustness-based MPPIs require the computation of STL robust semantics, which must be executed in shrinking-horizon (i.e., $\mathcal{H} \coloneq H - t + 1$ adapts online). The augmented costs for each method are rescaled with weights\footnote{For all methods that do not use a CBF projection, as standard practice \cite{um-arm-lab_pytorch_mppi_2024}, we impose an action filter in the rollouts to bound the actions within actuation limit during the rollouts.}. We use the same regularization and weighting schemes across baselines. We specifically emphasize that none of the above methods are designed for providing safety guarantee with STL under arbitrary cost functions, and to the best our knowledge we are the first work to address such problem with MPPI.

\textbf{Results.} For each case, we evaluate the experiment via three metrics. \textbf{Success Rate (SR)} measures the percentage of closed-loop planned trajectories over the $20$ trials with nonnegative robust semantics. For negation-free discrete-time STL formulas with bounded metric intervals (which we have in evaluation), a trajectory satisfies the STL formula if and only if its robust semantics are nonnegative. \textbf{Actuation Limit Satisfaction Rate (ALSR)} measures the percentage of trajectories satisfying the actuation limit at all time\footnote{We treat actuation as within limit if it lies within $[u_{min, i} - \epsilon, u_{max, i} + \epsilon]$ for all entries $i$ with $\epsilon \coloneq 10^{-5}$ to prevent floating-point issues.}. Lastly, \textbf{timing}, in seconds, measures the overall computation time for the closed-loop planning (i.e. the time needed to solve for $x^*$ in problem \ref{prob:formulation}) averaged across the 20 trials. We show the results across all baselines in Table \ref{tab:mars_rov}\footnote{In Case A with limited actuation when evaluating safety-aware-stl-mppi, one of the 20 trials terminated with an infeasible CBF-QP (Assumption 1 fails for that initial condition under the tight actuation bound); the planner reports infeasibility rather than executing an unsafe action. SR and timing are computed over the 19 completed trials.}. All methods achieved an ALSR of $100\%$ across all cases. Importantly, our method demonstrates $100\%$ empirical STL satisfaction rate in the evaluated trials and computation \textbf{efficiency} under \textbf{diverse cost settings and environments}. Vanilla MPPI is efficient but violates safety constraints frequently. Reach-avoid MPPI cannot generalize to complex STL tasks with tight timed constraints. Robustness and penalty-based MPPIs suffer from high computation costs due to the long-horizon rollouts and do not have safety guarantee.


\subsection{Quadcopter Simulation}
We consider a drone planning simulation in an artificial factory environment in Isaac Lab with a Crazyflie quadcopter. The environment consists of seven obstacles including three package tables, one two-level stand and three forklifts. We estimate the obstacles with infinite cylinder bounding volumes, which we denote with $C_i$ where $i \in \{1, \hdots, 7\}$. We first consider a planning stage (with $200 HZ$ planning frequency) where we model the quadcopter with a double integrator dynamics (see case D from Section \ref{subsec:artificial} but extended to $3$ dimensions), where $x_t \coloneq (p_t, v_t)$ with $p_t \coloneq (p^x, p^y, p^z)^T \in \mathbb{R}^3$ and $v_t \in \mathbb{R}^3$. We impose an actuation limit of $(-0.2, -0.2, -0.2)^T \preccurlyeq u_t \preccurlyeq (0.2, 0.2, 0.2)^T$. The goal of the task is to safely navigate near $(0, 0, 1)^T$ without collision with the obstacles nor the ground. Formally, we consider the safety specification $\phi \coloneq \wedge_{i = 1}^7 G_{[0s, 5s]} S(p, C_i) \ge 0 \wedge G_{[0s, 5s]} (p^z - 0.3 ) \ge 0 \wedge F_{[5s, 5s]} \|p - (0, 0, 1)^T\| \le 1$, where $S(p, C)$ denotes the signed distance between $p$ and set $C$. We consider $50$ experimental trials, where in each experiment, we use safety-aware-stl-mppi to plan a trajectory. To allow a smooth trajectory converging towards the goal with minimal actuation expense, a stable height, and a small velocity, we consider a stage cost of $q(x_\tau^{k, (t)}, u^{k, (t)}) \coloneq \|u^{k, (t)}\|_2 + |p_\tau^{k, (t), z} - 1| + \|v_\tau^{k, (t)}\|_2$ and a terminal cost of $E(x_{t + H}^{k, (t)}) \coloneq 0$. See Figure \ref{fig:isaac_planned} for an example of a successfully planned trajectory, denoted by the red waypoints. We show an example of the rollouts during planning in Figure \ref{fig:isaac_rollout}, where the gray waypoints denote the rollouts $x^k$ with $K \coloneq 100$ and the yellow waypoints are the weighted rollout. We record the average time of $15.21$ seconds from planning across the $50$ trials, with all executions satisfying the actuation limit during planning. In the second stage, we track the planned waypoints with position update through Isaac Sim API calls. We record that $100\%$ of the tracked trajectories satisfy the safety specification. \anonymize{See example video of the quadcopter flying based on the above pipeline in \url{placeholder}.}

\section{Conclusion}
In this paper, we propose safety-aware-stl-mppi, a receding-horizon sampling-based planner under STL safety specifications by integrating a time-varying CBF. We conducted a series of case studies with Mars Rover planning and a quadcopter simulation, which demonstrate that our method maintains high safety and efficiency outcomes. The next step will focus on extending the results to multi-agent tasks and dynamics with uncertainty.

\authorswitch{\section{Acknowledgement}
We would like to thank Hardik Parwana for discussing solutions for higher-order CBFs and Sung Woo Choi for discussing use cases and experiments with our algorithm. We would like to thank Paul Lutkus for ideas on visualizing the MPPI rollouts in the quadcopter study and suggestions in presentation. We acknowledge the adaptation of \cite{um-arm-lab_pytorch_mppi_2024} for our implementation of MPPI. This work was partially supported by the National Science Foundation through the following grants: CAREER award (SHF-2048094), IIS-SLES-2417075, and funding by Toyota R\&D through the USC Center for Autonomy and AI.}{}

\bibliographystyle{ieeetr}
\bibliography{main} 

@inproceedings{maler2004monitoring,
  title={Monitoring temporal properties of continuous signals},
  author={Maler, Oded and Nickovic, Dejan},
  booktitle={FTRTFT},
  pages={152--166},
  year={2004},
  organization={Springer}
}

@article{yin2023shield,
  title={Shield model predictive path integral: A computationally efficient robust mpc method using control barrier functions},
  author={Yin, Ji and Dawson, Charles and Fan, Chuchu and Tsiotras, Panagiotis},
  journal={RA-L},
  volume={8},
  number={11},
  pages={7106--7113},
  year={2023},
  publisher={IEEE}
}

@article{lindemann2018control,
  title={Control barrier functions for signal temporal logic tasks},
  author={Lindemann, Lars and Dimarogonas, Dimos V},
  journal={L-CSS},
  volume={3},
  number={1},
  pages={96--101},
  year={2018},
  publisher={IEEE}
}

@article{han2026signal,
  title={Signal Temporal Logic Constrained Motion Control of Electric Vehicles under Slippery Roads Using an MPPI Approach},
  author={Han, Jinheng and Li, Chao and Hou, Xiaohui and Pi, Dawei and Yin, Guodong and Zhao, Jing and Wong, Pak Kin and Wang, Yan},
  journal={IEEE Transactions on Transportation Electrification},
  year={2026},
  publisher={IEEE}
}

@article{marchesini2026sampling,
  title={Sampling-based planning under stl specifications: A forward invariance approach},
  author={Marchesini, Gregorio and Liu, Siyuan and Lindemann, Lars and Dimarogonas, Dimos V},
  journal={IEEE Transactions on Automatic Control},
  year={2026},
  publisher={IEEE}
}

@article{halder2026lexicographic,
  title={Lexicographic Minimum-Violation Motion Planning using Signal Temporal Logic},
  author={Halder, Patrick and Kiltz, Lothar and Homburger, Hannes and Reuter, Johannes and Althoff, Matthias},
  journal={arXiv preprint arXiv:2604.20428},
  year={2026}
}

@article{bouzid2026autonomous,
  title={Autonomous Driving with Priority-Ordered STL Specifications Under Multimodal Uncertainty},
  author={Bouzid, Taha and Qi, Shuhao and Lazar, Mircea and Haesaert, Sofie},
  journal={arXiv preprint arXiv:2606.20336},
  year={2026}
}

@article{zheng2026stl,
  title={STL-SVPIO: Signal Temporal Logic guided Stein Variational Path Integral Optimization},
  author={Zheng, Hongrui and Zang, Zirui and Amine, Ahmad and Vasile, Cristian Ioan and Mangharam, Rahul},
  journal={arXiv preprint arXiv:2603.13333},
  year={2026}
}

@inproceedings{sadraddini2015robust,
  title={Robust temporal logic model predictive control},
  author={Sadraddini, Sadra and Belta, Calin},
  booktitle={Allerton},
  pages={772--779},
  year={2015},
  organization={IEEE}
}

@article{fainekos2009robustness,
  title={Robustness of temporal logic specifications for continuous-time signals},
  author={Fainekos, Georgios E and Pappas, George J},
  journal={Theoretical Computer Science},
  volume={410},
  number={42},
  pages={4262--4291},
  year={2009},
  publisher={Elsevier}
}

@inproceedings{donze2010robust,
  title={Robust satisfaction of temporal logic over real-valued signals},
  author={Donz{\'e}, Alexandre and Maler, Oded},
  booktitle={FORMATS},
  pages={92--106},
  year={2010},
  organization={Springer}
}

@article{williams2018information,
  title={Information-theoretic model predictive control: Theory and applications to autonomous driving},
  author={Williams, Grady and Drews, Paul and Goldfain, Brian and Rehg, James M and Theodorou, Evangelos A},
  journal={T-RO},
  volume={34},
  number={6},
  pages={1603--1622},
  year={2018},
  publisher={IEEE}
}

@inproceedings{tao2022control,
  title={Control barrier function augmentation in sampling-based control algorithm for sample efficiency},
  author={Tao, Chuyuan and Kim, Hunmin and Yoon, Hyungjin and Hovakimyan, Naira and Voulgaris, Petros},
  booktitle={ACC},
  pages={3488--3493},
  year={2022},
  organization={IEEE}
}

@article{parwana2025br,
  title={BR-MPPI: Barrier Rate guided MPPI for Enforcing Multiple Inequality Constraints with Learned Signed Distance Field},
  author={Parwana, Hardik and Kim, Taekyung and Long, Kehan and Hoxha, Bardh and Okamoto, Hideki and Fainekos, Georgios and Panagou, Dimitra},
  journal={arXiv preprint arXiv:2506.07325},
  year={2025}
}

@article{baldini2024don,
  title={Don't Get Stuck: A Deadlock Recovery Approach},
  author={Baldini, Francesca and Tariq, Faizan M and Bae, Sangjae and Isele, David},
  journal={arXiv preprint arXiv:2408.10167},
  year={2024}
}

@article{halder2025trajectory,
  title={Trajectory planning with signal temporal logic costs using deterministic path integral optimization},
  author={Halder, Patrick and Homburger, Hannes and Kiltz, Lothar and Reuter, Johannes and Althoff, Matthias},
  journal={arXiv preprint arXiv:2503.01476},
  year={2025}
}

@inproceedings{raman2014model,
  title={Model predictive control from signal temporal logic specifications: A case study},
  author={Raman, Vasumathi and Maasoumy, Mehdi and Donz{\'e}, Alexandre},
  booktitle={Proc.of the 4th ACM SIGBED International Workshop on Design, Modeling, and Evaluation of Cyber-Physical Systems},
  pages={52--55},
  year={2014}
}

@inproceedings{williams2016aggressive,
  title={Aggressive driving with model predictive path integral control},
  author={Williams, Grady and Drews, Paul and Goldfain, Brian and Rehg, James M and Theodorou, Evangelos A},
  booktitle={ICRA},
  pages={1433--1440},
  year={2016},
  organization={IEEE}
}

@article{homburger2025optimality,
  title={Optimality and suboptimality of MPPI control in stochastic and deterministic settings},
  author={Homburger, Hannes and Messerer, Florian and Diehl, Moritz and Reuter, Johannes},
  journal={L-CSS},
  year={2025},
  publisher={IEEE}
}

@inproceedings{dimos2019learning,
  title={A learning framework for versatile STL controller synthesis},
  author={Dimos, Peter Varnai and Dimarogonas, Dimos V},
  booktitle={CDC},
  pages={4596--4600},
  year={2019},
  organization={IEEE}
}

@article{varnai2020guided,
  title={Guided policy improvement for satisfying STL tasks using funnel adaptation},
  author={Varnai, Peter and Dimarogonas, Dimos V},
  journal={arXiv preprint arXiv:2004.05653},
  year={2020}
}

@inproceedings{varnai2019prescribed,
  title={Prescribed performance control guided policy improvement for satisfying signal temporal logic tasks},
  author={Varnai, Peter and Dimarogonas, Dimos V},
  booktitle={ACC},
  pages={286--291},
  year={2019},
  organization={IEEE}
}

@article{parwana2024model,
  title={Model predictive path integral methods with reach-avoid tasks and control barrier functions},
  author={Parwana, Hardik and Black, Mitchell and Fainekos, Georgios and Hoxha, Bardh and Okamoto, Hideki and Prokhorov, Danil},
  journal={arXiv preprint arXiv:2407.13693},
  year={2024}
}

@inproceedings{vasile2017sampling,
  title={Sampling-based synthesis of maximally-satisfying controllers for temporal logic specifications},
  author={Vasile, Cristian-Ioan and Raman, Vasumathi and Karaman, Sertac},
  booktitle={IROS},
  pages={3840--3847},
  year={2017},
  organization={IEEE}
}

@inproceedings{pnueli1977temporal,
  title={The temporal logic of programs},
  author={Pnueli, Amir},
  booktitle={sfcs 1977},
  pages={46--57},
  year={1977},
  organization={ieee}
}

@article{fainekos2009temporal,
  title={Temporal logic motion planning for dynamic robots},
  author={Fainekos, Georgios E and Girard, Antoine and Kress-Gazit, Hadas and Pappas, George J},
  journal={Automatica},
  volume={45},
  number={2},
  pages={343--352},
  year={2009},
  publisher={Elsevier}
}

@article{ding2014optimal,
  title={Optimal control of Markov decision processes with linear temporal logic constraints},
  author={Ding, Xuchu and Smith, Stephen L and Belta, Calin and Rus, Daniela},
  journal={TAC},
  volume={59},
  number={5},
  pages={1244--1257},
  year={2014},
  publisher={IEEE}
}

@article{kantaros2020stylus,
  title={Stylus*: A temporal logic optimal control synthesis algorithm for large-scale multi-robot systems},
  author={Kantaros, Yiannis and Zavlanos, Michael M},
  journal={IJRR},
  volume={39},
  number={7},
  pages={812--836},
  year={2020},
  publisher={SAGE Publications Sage UK: London, England}
}

@article{zhong2022guided,
  title={Guided conditional diffusion for controllable traffic simulation},
  author={Zhong, Ziyuan and Rempe, Davis and Xu, Danfei and Chen, Yuxiao and Veer, Sushant and Che, Tong and Ray, Baishakhi and Pavone, Marco},
  journal={arXiv preprint arXiv:2210.17366},
  year={2022}
}

@article{meng2023signal,
  title={Signal temporal logic neural predictive control},
  author={Meng, Yue and Fan, Chuchu},
  journal={RA-L},
  volume={8},
  number={11},
  pages={7719--7726},
  year={2023},
  publisher={IEEE}
}

@article{paszke2019pytorch,
  title={Pytorch: An imperative style, high-performance deep learning library},
  author={Paszke, Adam and Gross, Sam and Massa, Francisco and Lerer, Adam and Bradbury, James and Chanan, Gregory and Killeen, Trevor and Lin, Zeming and Gimelshein, Natalia and Antiga, Luca and others},
  journal={NeurIPS},
  volume={32},
  year={2019}
}

@article{makoviychuk2021isaac,
  title={Isaac gym: High performance gpu-based physics simulation for robot learning},
  author={Makoviychuk, Viktor and Wawrzyniak, Lukasz and Guo, Yunrong and Lu, Michelle and Storey, Kier and Macklin, Miles and Hoeller, David and Rudin, Nikita and Allshire, Arthur and Handa, Ankur and others},
  journal={arXiv preprint arXiv:2108.10470},
  year={2021}
}

@article{mittal2023orbit,
   author={Mittal, Mayank and Yu, Calvin and Yu, Qinxi and Liu, Jingzhou and Rudin, Nikita and Hoeller, David and Yuan, Jia Lin and Singh, Ritvik and Guo, Yunrong and Mazhar, Hammad and Mandlekar, Ajay and Babich, Buck and State, Gavriel and Hutter, Marco and Garg, Animesh},
   journal={RA-L},
   title={Orbit: A Unified Simulation Framework for Interactive Robot Learning Environments},
   year={2023},
   volume={8},
   number={6},
   pages={3740-3747},
   doi={10.1109/LRA.2023.3270034}
}

@misc{um-arm-lab_pytorch_mppi_2024,
  title        = {pytorch\_mppi},
  author       = {Autonomous Robotic Manipulation Lab, University of Michigan},
  year         = {2024},
  howpublished = {\url{https://github.com/UM-ARM-Lab/pytorch_mppi}},
  note         = {Accessed: 2025-08-17}
}

@book{lindemann2025formal,
  title={Formal Methods for Multi-Agent Feedback Control Systems},
  author={Lindemann, Lars and Dimarogonas, Dimos V},
  year={2025},
  publisher={MIT Press}
}

@article{zhao2026logic,
  title={Logic-VLA: A Temporal Logic Conditioned Vision-Language-Action Model},
  author={Wang, Celina Shiyu and Zhao, Yiqi and Ye, Junjie and Wang, Yue and Deshmukh, Jyotirmoy V},
  journal={arXiv preprint arXiv:2608.20556},
  year={2026}
}

@incollection{bartocci2018specification,
  title={Specification-based monitoring of cyber-physical systems: a survey on theory, tools and applications},
  author={Bartocci, Ezio and Deshmukh, Jyotirmoy and Donz{\'e}, Alexandre and Fainekos, Georgios and Maler, Oded and Ni{\v{c}}kovi{\'c}, Dejan and Sankaranarayanan, Sriram},
  booktitle={Lectures on Runtime Verification: Introductory and Advanced Topics},
  pages={135--175},
  year={2018},
  publisher={Springer}
}

@inproceedings{hekmatnejad2019encoding,
  title={Encoding and monitoring responsibility sensitive safety rules for automated vehicles in signal temporal logic},
  author={Hekmatnejad, Mohammad and Yaghoubi, Shakiba and Dokhanchi, Adel and Amor, Heni Ben and Shrivastava, Aviral and Karam, Lina and Fainekos, Georgios},
  booktitle={MEMOCODE},
  pages={1--11},
  year={2019}
}

@inproceedings{kordabad2024control,
  title={Control barrier functions for stochastic systems under signal temporal logic tasks},
  author={Kordabad, Arash Bahari and Charitidou, Maria and Dimarogonas, Dimos V and Soudjani, Sadegh},
  booktitle={ECC},
  pages={3213--3219},
  year={2024},
  organization={IEEE}
}
\appendixswitch{
\section{Appendix}

\subsection{Proof for Theorem \ref{thm:forward}}
\label{proof:forward}
\begin{proof}
    The theorem can be proven by induction (similar to the proof of Property III.1 in \cite{yin2023shield}): We assumed $x_0 \in \mathcal{C}_0$. At step $t$, knowing $\alpha \in (0, 1]$, we assume $x_t \in \mathcal{C}_t$ and $u_t \in S(x_t, t)$ and thus $h(f(x_t, u_t), t + 1) \ge h(x_t, t) -\alpha h(x_t, t) \ge 0$, which implies $f(x_t, u_t) \in \mathcal{C}_{t + 1}$.
\end{proof}

\subsection{Proof for Theorem \ref{thm:safety_cbf}}
\label{proof:safety_cbf}
\begin{proof}
        We are given the condition $x_0 
        \in C_0$. From Theorem \ref{thm:forward}, we know $x_{t_i} \in \mathcal{C}_{t_i} \implies x_t \in \mathcal{C}_t$ for $t \in \{t_i, \hdots, t_{i + 1}\}$. Since the exponential function is strictly positive and logarithm is monotonic, we have $h(x, t_{i + 1}) \le h^+(x, t_{i + 1})$ for all $x \in D$, where $h^+(x,t_{i+1}) := \widetilde{\text{min}}(\mathcal{A}_{t_{i+1}+1}(x,t))$ denotes the deletion. The update is forward invariant ($h(x, t_{i + 1}) \ge 0 \implies h^+(x, t_{i + 1}) \ge 0$). Since $\widetilde{\text{min}}$ lower-bounds each argument, $h^\phi(x_t,t)\ge 0$ implies $h_i(x_t,t)\ge 0$ for all $t\in\{t_i,\ldots,t_{i+1}\}$, including the endpoint $t_{i+1}$; by the conditions of Table I this implies satisfaction of each $\phi_i$, and hence $(x,0)\models\phi$ by induction on the structure of the formula: For an atomic predicate $\pi^\mu$, the implication follows directly from the first row of Table \ref{tab:stl_encodings}. For $G_{[a, b]}$, Table \ref{tab:stl_encodings} gives the predicate at every $t \in \sigma(\tau_0 + a, \tau_0 + b)$. For $F_{[a, b]}$, Table \ref{tab:stl_encodings} gives a witness $t' \in \sigma(\tau_0 + a, \tau_0 + b)$. For $U_{[a. b]}$, Table \ref{tab:stl_encodings} gives a witness for the right operand and the left operand up to that witness. Conjunction follows because the smooth minimum being nonnegative implies every constituent barrier is nonnegative.
    \end{proof}

\subsection{Derivation of Equation \eqref{eq:qp_closed}}
\label{appendix:qp} For simple notation, let $\hat{v}^s \coloneq \hat{v}^{s, (t)}_\tau, v^+ \coloneq v_\tau^{+, (t)}, a \coloneq a_{\textup{cbf}}(x_\tau^{(t)}, \tau), \text{ and } b \coloneq b_{\textup{cbf}}(x_\tau^{(t)}, \tau)$. We assumed $a \neq 0$ in Section \ref{sec:alg}. Assume $\mathcal{U} \coloneq \mathbb{R}^m$. Then, we can write \eqref{eq:qp} as:
\begin{align*}
    \hat{v}^{s}_\tau =& \arg\min_{v \in \mathbb{R}^m}\{\frac{1}{2}\|v - v^+\|^2 \mid a^Tv \le b\}. \nonumber
\end{align*}

Suppose $a^Tv^+ \le b$, it is intuitive to have $\hat{v}^s = v^+$ because this corresponds to an already safe action, and the optimal objective cost is $0$ and any $\hat{v}^s \neq v^+$ will result in a straight positive cost.

On the other hand, suppose $a^Tv^+ > b$. We have the Lagrangian $\mathcal{L}(v, \mu) = \frac{1}{2}\|v - v^+\|^2 + \mu(a^Tv - b)$. By the KKT conditions, the stationarity states $\hat{v}^s = v^+ - \mu a$ and the complementary slackness states $\mu(a^T\hat{v}^s - b) = 0$. Since $a^Tv^+ > b$, the optimum must lie on the boundary and thus $a^T\hat{v}^s = b$. By substitution, we have $a^T(v^+ - \mu a) = b$ and thus it must hold that $\mu \coloneq (a^Tv^+ - b) / \|a\|_2^2$. Therefore, $\hat{v}^s = v^+ - a[(a^Tv^+ - b)/ \|a\|_2^2]$. Note that here the KKT conditions are sufficient and necessary since \eqref{eq:qp} is strictly convex with affine feasible region containing nonempty interior.

Combining the above two cases, one can derive the closed form solution $\hat{v}^s = v^+ - a[\max(0, a^Tv^+ - b)/ \|a\|_2^2]$.

\subsection{Theorem \ref{thm:tighten_local}}
\label{thm:tighten_local}
\begin{theorem}
\label{thm:tighten_local}
    Suppose $\nabla h^\phi(\cdot, \tau + 1)$ is $L_{x, \tau}$-lipschitz on a set $K_{x}$ containing every segment $\{x + s(f(x, v) - x) : s \in [0, 1]\}$ where $v \in \mathcal{U}$ (i.e., $\|\nabla_x h^\phi(x_1, \tau + 1) - \nabla_xh^\phi(x_2, \tau + 1)\| \le L_{x, \tau}\|x_1 - x_2\|, \forall x_1, x_2 \in K_x$). Assume also $\Delta_{x} \coloneq\sup_{v \in \mathcal{U}} \|f(x, v) - x\| < \infty$. Then \eqref{eq:cons_replaced} defines a subset of the feasibility region to \eqref{eq:cbf_opt} if the right-hand side of \eqref{eq:cons_replaced} is incremented\footnote{The norms in Theorem \ref{thm:tighten_local} are Euclidean norms.} by $\epsilon_{x, \tau} \coloneq \frac{L_{x, \tau}}{2}\Delta_{x}^2$.
    
    \begin{proof}
    Let us denote $P(\cdot) \coloneq h^\phi(\cdot, \tau + 1)$ and $d_v = f(x, v) - x$. We can show the local version of the descent lemma. Per assumption, $\nabla P$ is $L_{x, \tau}$-lipschitz on $\{x + sd_v: s \in [0, 1], \forall v \in \mathcal{U}\}$. Suppose $g_v(s) = P(x + sd_v)$, then
    \begin{align*}
        &P(f(x, v)) - P(x) \\= &\int_0^1 \nabla P(x + sd_v)^Td_vds \\= &\nabla P(x)^Td_v + \int_0^1( \nabla P(x + sd_v) - \nabla P(x))^Td_vds.
    \end{align*}
    We then have by Cauchy-Schwarz and our lipschitz assumption that
    \begin{align*}
        &(\nabla P(x + sd_v) - \nabla P(x))^Td_v \\\ge &- \|\nabla P(x + sd_v) - \nabla P(x)\|\|d_v\| \\\ge &- L_{x, \tau}s\|d_v\|^2.
    \end{align*}
    Therefore, we have
    \begin{align*}
        P(f(x, v)) - P(x) \ge \nabla P(x)^Td_v - \frac{L_{x, \tau}}{2}\|d_v\|^2
    \end{align*}
    which holds for all $v \in \mathcal{U}$. By assumption, we have $\|d_v\| = \|f(x, v) - x\| \le \Delta_x, \forall v \in \mathcal{U}$. We thus have
    \begin{align*}
        P(x) +  \nabla P(x)^Td_v - \frac{L_{x, \tau}}{2}\Delta_x^2 \le P(f(x, v))
    \end{align*}
    Hence, if
    \begin{align*}
        &P(x) +  \nabla P(x)^Td_v - \frac{L_{x, \tau}}{2}\Delta_x^2
        \\\coloneq & h^\phi(x, \tau + 1) + \nabla_xh^\phi(x, \tau + 1)[(f_0(x) - x) \\&+ g(x)v] - \frac{L_{x, \tau}}{2}\Delta_x^2 \ge (1 -\alpha) h^\phi(x, \tau),
    \end{align*}
    then it holds that
    \begin{align*}
        h^\phi(f(x, v), \tau + 1) \ge (1 -\alpha) h^\phi(x, \tau).
    \end{align*}
\end{proof}
\end{theorem}

By Theorem \ref{thm:tighten_local}, we can derive a robustified algorithm (which we call r-safety-aware-stl-mppi), where we replace $b_{\text{cbf}}(x_\tau^{(t)}, \tau)$ in \eqref{eq:qp} and \eqref{eq:qp_closed} with $b^r_{\text{cbf}}(x_\tau^{(t)}, \tau) \coloneq b_{\text{cbf}}(x_\tau^{(t)}, \tau) - \epsilon_{x, \tau}$ in Algorithm \ref{alg:algorithm}.

Computing $L_{x, \tau}$ is generally nontrivial, and one can use a heuristic sampling-based estimation
\begin{align}
\label{eq:lipschitz_estimation}
    L_{x, \tau} \approx \max_{x_i, x_j \in K_x, i \neq j} \frac{\|\nabla_x h^\phi(x_i, \tau + 1) - \nabla_xh^\phi(x_j, \tau + 1)\|}{\|x_i - x_j\|}.
\end{align}
However, for control affine dynamics $f(x, u) = f_0(x) + g(x)u$ and boxed input constraints $\mathcal{U}$, it is easy to show that $\Delta_x = \max_{v \in \text{vert}(\mathcal{U})} \|f_0(x) - x + g(x)v\|$ where $\text{vert}(\cdot)$ denote the set of box vertices. Since $\mathcal{U}$ is a convex hull of its vertices $w_1, \hdots, w_k$, for any $v \in U$, we can express $v = \sum_{i = 1}^k \lambda_iw_i$ where $\lambda_i \ge 0$ are coefficients such that $\sum_i\lambda_i = 1$. Suppose $p(v) = \|(f_0(x) - x) + g(x)v\|_2$. Since $p$ is convex, we have
\begin{align*}
    p(v) = p(\sum_{i = 1}^k \lambda_iw_i) \le \sum_{i = 1}^k\lambda_i p(w_i) \le \max_i p(w_i)
\end{align*}
and we can conclude that \textbf{in this setting}
\begin{align*}
\Delta_x = \max_{v \in \text{vert}(\mathcal{U})} \|f_0(x) - x + g(x)v\|_2.
\end{align*}
We remark on the conservatism of this robustification. Also, compared to safety-aware-stl-mppi, the robustified method is more time consuming due to the computation of $\epsilon_{x, \tau}$.}{}
\end{document}